\documentclass[11pt]{article}
\oddsidemargin 0 truemm \evensidemargin 0 truemm \marginparsep 0pt
\topmargin -50pt \textheight 240 truemm \textwidth 160 truemm
\parindent 0em \parskip 1ex

\usepackage{amsmath}
\usepackage{amsthm}
\usepackage{amscd,amssymb,stmaryrd}
\usepackage{graphicx}
\usepackage{subcaption}
\usepackage{algpseudocode}
\usepackage{algorithmicx}
\usepackage{algorithm}

\newtheorem{theorem}{Theorem}[section]

\newtheorem{remark}[theorem]{Remark}

\title{An Energy-Based Self-Adaptive Learning Rate for Stochastic Gradient Descent: Enhancing Unconstrained Optimization with VAV method}
\author{Jiahao Zhang\footnote{Department of Mathematics, Purdue University, West Lafayette, IN 47907, USA. (Email: zhan2296@purdue.edu)}, 
Christian Moya\footnote{Department of Mathematics, Purdue University, West Lafayette, IN 47907, USA. (Email: cmoyacal@purdue.edu)},
and Guang Lin\footnote{Department of Mathematics and Mechanical Engineering, Purdue University, West Lafayette, IN 47907, USA. (Email: guanglin@purdue.edu)}
}
\date{}
\begin{document}
\maketitle
\begin{abstract}
Optimizing the learning rate remains a critical challenge in machine learning, essential for achieving model stability and efficient convergence. The Vector Auxiliary Variable (VAV) algorithm introduces a novel energy-based self-adjustable learning rate optimization method designed for unconstrained optimization problems. It incorporates an auxiliary variable $r$ to facilitate efficient energy approximation without backtracking while adhering to the unconditional energy dissipation law. Notably, VAV demonstrates superior stability with larger learning rates and achieves faster convergence in the early stage of the training process. Comparative analyses demonstrate that VAV outperforms Stochastic Gradient Descent (SGD) across various tasks. This paper also provides rigorous proof of the energy dissipation law and establishes the convergence of the algorithm under reasonable assumptions. Additionally, $r$ acts as an empirical lower bound of the training loss in practice, offering a novel scheduling approach that further enhances algorithm performance.
\end{abstract}

\section{Introduction}\label{sec:Introduction}
Optimization lies at the core of machine learning, providing the foundational methods for training models to make accurate predictions and decisions. Among the numerous challenges faced in optimization, determining the optimal learning rate is paramount. Traditional fixed learning rate methods often struggle to balance between rapid convergence and the risk of overshooting minima, necessitating frequent manual adjustments based on empirical observations. In response to these challenges, this paper introduces a novel self-adjustable learning rate optimization method designed to dynamically adapt the learning rate based on the evolving state of the model during training. the proposed method utilizes real-time feedback from the training process itself to adjust the learning rate. This adaptability not only accelerates convergence but also enhances the stability and performance of the learning algorithm across various machine-learning tasks.
\subsection{Problem Setting} \label{sub-sec:Notation}
Consider the unconstrained optimization problem:
\begin{equation}\label{minP}
    \min_{x \in \mathbb{R}^n}~f(x),
\end{equation}
where $f:\mathbb{R}^n \to \mathbb{R} $ is continuously differentiable and bounded from below, i.e., $f^* = \inf_{x \in \mathbb{R}^d}f(x) > -c.$ Thanks to the foundational contributions in \cite{brown1989some, saupe1989discrete, zufiria1990application}, it is established that applying a gradient descent method
\begin{equation}
    x_{n+1} = x_n - \eta\nabla f(x(t)),
\end{equation}
to the optimization problem \eqref{minP} can be interpreted as a numerical approach to solving the ordinary differential equation:
\begin{equation}
    \frac{dx}{dt} = - \nabla f(x(t)),
\end{equation}
In this context, $f(x)$ represents the loss function of a neural network, with $x$ denoting the network's parameters, including weights and biases. 

In real-world deep learning scenarios, training on the full dataset is not practical due to memory constraints. Consequently, this paper focuses on the stochastic (mini-batch) application of the proposed method, which we formalize as: 
\begin{equation}\label{iterAlm}
    x_{n+1} = x_n - \eta \nabla g_n
\end{equation}
Here, $g_n$ is the gradient of $f$ calculated over a selected subset of samples.
\subsection{Previous Works} \label{sub-sec:Energy-Methods}
The scalar auxiliary variable~(SAV) method was originally established in \cite{shen2018convergence, shen2018scalar, shen2019new} as a numerical scheme for solving a class of ODEs known as gradient flows. The application of SAV method in neural networks was demonstrated in many works such as \cite{zhang2024energy}. It was then adapted for optimization problems by \cite{liu2020aegd} through AEGD method. later, \cite{liu2023efficient, zhang2022numerical} introduced a relaxed version of the RSAV method optimized for full-batch training, enhancing its alignment with real-time loss values. Alongside these, other learning rate adaptivity methods such as \cite{zhao2022stochastic, defazio2023learningratefree, zeiler2012adadelta, kingma2017adam} continue to evolve, offering varied approaches to optimizing training processes in diverse computational environments.
\subsection{Our Contribution} \label{sub-sec:Previous-Works}

In this work, we present several novel contributions to the optimization algorithms:
\begin{itemize}
    \item \textbf{Novel Optimization Method:} We propose a novel self-adjustable learning rate, gradient-based optimization method. This approach demonstrates greater stability with larger learning rates compared to traditional Stochastic Gradient Descent (SGD), and it achieves faster convergence and enhanced performance across various machine learning tasks.
    \item \textbf{Theoretical Foundation:} We provide a rigorous proof of the dissipative law of the parameter 'r' and establish the convergence of the algorithm. This theoretical underpinning ensures the reliability and predictability of our method.
    \item \textbf{Energy-based Self-Adaptive Scheduler:} The 'r' parameter introduced in our algorithm serves as a critical indicator for monitoring the training process. In practice, it can be considered as the empirical lower bound of the training loss. It measures the distance to the optimal minimum, acting as a dynamic scheduler for the learning rate. This functionality allows for further acceleration of the convergence process, enhancing the efficiency of the training phase.
\end{itemize}
\section{VAV method} \label{sec:Preliminary}
A fundamental challenge in optimization problems is to measure the distance between the current state and the global minimum. Traditionally, this distance is quantified using the loss function, which is expected to decrease progressively over time. However, due to the non-convex nature of the problem and the application of stochastic methods, the training loss often exhibits non-dissipative behavior. To address this, a new variable $r$ is designed to monitor the training progress while preserving dissipative properties.

\begin{subequations}\label{equation:SAV}
\begin{align} 
\frac{x_{t+1} - x_t}{\eta} & + \frac{ r_{t+1}}{\sqrt{f(x_t;\xi_t)}} G_t = 0,  \\ \frac{r_{t+1} - r_t}{\eta} &= \frac{1}{2 \sqrt{f(x_t;\xi_t)}} \left\langle G_t,\frac{x_{t+1} - x_t}{\eta}\right\rangle \end{align}
\end{subequations}

In this framework, $r$ serves as an approximation of the loss value at the subsequent time step, suggesting an inherent relationship with the current loss. Previous studies \cite{jiang2022improving} have demonstrated that defining $r_t$ as a convex linear combination of $\hat{r_t}$ and $\sqrt{f(x_t; \xi_t)}$, under careful choice of coefficient $\omega_0$, preserves the energy dissipative property. This approach aligns $r_t$ closely with the dynamics of the loss function, facilitating an accurate tracking of its trajectory through the optimization landscape.

\begin{subequations}\label{equation:RSAV}
\begin{align} 
&r_{t+1} = \omega_0 \tilde{r}_{t+1} + (1-\omega_0) \sqrt{f(x_{t+1};\xi_{t+1})}
\end{align}
\end{subequations}

where, for a given $\psi \in (0,1),~\omega_0$ is the smallest number in the interval $[0,1]$ such that
$$(r_{t+1})^2 - (\tilde{r}_{t+1})^2 \le \frac{\psi}{\eta}\| x_{t+1} - x_t\|^2.$$
In practice, $\psi$ is a parameter of our choice, and it is usually set as $\psi = 0.95$. 

Furthermore, the task of identifying $\omega_0$ can be effectively reduced to solving a quadratic equation analytically. This simplification allows for more straightforward implementation and potentially enhances computational efficiency by the following direct computation:
\begin{equation}\label{etai0}
    \omega_{i0} = \min_{\omega_i \in [0,1]} \eta_i,  \text{ such that } a \omega_i^2 + b \omega_i + c \leq 0,
\end{equation}
where 
\begin{equation}\label{eq:quadratic solution}
    \begin{aligned}
        & a = ( \sqrt{f(x^{n+1})} - \tilde{r}_i^{n+1} )^2, \\
        & b = 2\sqrt{f(x^{n+1})} (\tilde{r}_i^{n+1} - \sqrt{f(x^{n+1})} ),\\
        & c = f(x^{n+1}) - (\tilde{r}_i^{n+1})^2 -  \frac{\psi}{\Delta t} ({x_i^{n+1}-x_i^{n}})^2.
    \end{aligned}
\end{equation}
If  $a= 0$, i.e., $\tilde{r}_i^{n+1} = \sqrt{f(x^{n+1})}$,  we set $\omega_{i0}=0$. Otherwise,  the solution to the problem \eqref{etai0} can be written as 
\begin{equation}\label{eq:omega}
    \omega_{i0} = \max\left\{\frac{-b-\sqrt{b^2-4ac}}{2a}, 0 \right\}.
\end{equation}
It can be readily verified that $b^2-4ac \geq 0$ for any $\Delta t$. This condition ensures the real-valued solutions necessary for the stability of the system over the given interval.

In modern deep-learning applications, the parameter count is often extraordinarily high. Applying a single scalar $r$ uniformly across all dimensions is suboptimal. To address this, $r_i$ is computed individually for each dimension. Consequently, the ratio $\rho_i = \frac{r_i}{\sqrt{f(x_t;\xi_t)}}$ acts as an energy-based self-adjusting mechanism for the learning rate $\eta$ within each respective dimension. A value of $\rho_i$ less than $1$ indicates that the current $\eta$ is too large for optimal training progress in that direction for the upcoming step. This triggers an automatic modification of $\eta$ to $\rho_i\eta$, thereby adjusting the update distance for $x$ in that dimension to a more suitable scale.

The Stochastic Gradient \textit{Vector Auxiliary Variable} (VAV) dynamics, also known as Stochastic Gradient \textit{Elementwise Relaxed Scalar Auxiliary Variable} (ERSAV) dynamics, are, for $i=1,2,\ldots,n$:
\begin{subequations}\label{equation:ERSAV}
\begin{align} 
&\frac{x_{t+1,i} - x_{t,i}}{\eta} + \frac{\tilde{r}_{t+1,i}}{\sqrt{f(x_t;\xi_t)}} G_{t,i} = 0,  \\ &\frac{\tilde{r}_{t+1,i} - r_{t,i}}{\eta} = \frac{1}{2 \sqrt{f(x_t;\xi_t)}} G_{t,i} \frac{x_{t+1,1} - x_{t,i}}{\eta} \\ 
&r_{t+1,i} = \omega_{0,i} \tilde{r}_{t+1,i} + (1-\omega_{0,i}) \sqrt{f(x_{t+1};\xi_{t+1})}
& 
\end{align}
\end{subequations}
As before, for a given $\psi \in (0,1),~\omega_{0,i}$ is the smallest number in the interval $[0,1]$ such that
$$(r_{t+1,i})^2 - (\tilde{r}_{t+1,i})^2 \le \frac{\psi}{\eta}( x_{t+1,i} - x_{t,i})^2,$$
for $i=1,2,\ldots,n$. $\omega_{0,i}$ can be computed through \eqref{eq:omega}.
\section{Discussion of the Algorithm} \label{sec:Algorithm}

\subsection{Baseline Algorithm}
The proposed algorithm represents an energy-based self-adaptive learning rate gradient descent methodology. This is achieved by incorporating a multiplier, $\rho = \frac{r_i}{\sqrt{f(x_t)}}$, to the learning rate, effectively introducing an adaptive regularizer into the algorithm. Such an adjustment is anticipated to facilitate faster convergence and reduce the variance of both training and test losses. These phenomena will be further explored and detailed in the experimental section of this paper. A pseudocode representation of the proposed method is succinctly summarized in Algorithm $1$.

\begin{algorithm}[h]
    \caption{VAV Method}
    \label{alg:example}
\begin{algorithmic}
\State {\bfseries Input:} Initial guess $x_0$, \\
                                 Default learning rate $\eta$, \\
                                 $\psi\in(0,1)$ (default $0.95$), \\
                                 Maximum iteration $N$.\\
                                 A non-negative constant number $c$ (default $0$)
\Repeat
\State Set n = 0\\
\State Initialize $r^0 = \sqrt{f(x^0) + c}(1,1,\cdots,1)$.
\While{$n \leq N$}\\
\State {$g_i^n = \frac{\partial f(x^n)}{\partial x_i}$ for $i=1,\cdots,m$}\\
\State {$\tilde{{r}}_i^{n+1} = (1 + 2\eta(\frac{g_n}{f(x^n)+c})^2)^{-1}r_i^n$ for $i=1,\cdots,m$}\\
\State {Update $x_i^{n+1} = x_i^n - \eta\frac{\tilde{{r}}_i^{n+1}}{\sqrt{f(x^n)+c}}g_i^n$}\\
\State {Compute $\omega_i$ by equation \eqref{eq:omega}}\\
\State {Update $r_i^{n+1} = \omega_i\tilde{{r}}_i^{n+1} + (1-\omega_i)\sqrt{f(x^{n+1})+c}$}\\
\State {Update $n = n+1$}\\
\EndWhile
\Until{$n=N$}
\end{algorithmic}
\end{algorithm}
\subsection{Cost of the Proposed Method}
In the baseline model of the proposed algorithm, $r$ and $\omega_0$ are intermediate variables computed at each training step, which slightly increases the computational cost. However, the calculation of these variables requires only the loss value $f(x_t;\xi_t)$ and the gradient of the mini-batch $g_t$ at the current step, both of which are already determined during each iteration. Consequently, the proposed method primarily involves a few linear operations, such as matrix multiplications. Notably, it eschews the need for back-tracking or line searches, and it requires no additional evaluations of function values or gradients throughout the training process. Moreover, the computational demand of the algorithm does not scale with the size of the dataset, which makes it an efficient solution for large-scale applications.
\subsection{The Role of Parameters in the Proposed Algorithm}
\begin{itemize}
    \item \textbf{r and the ratio $\rho$:} As previously discussed, $r$ serves as an approximation of $\sqrt{f(x_t)}$ and it adheres to the dissipative law. The ratio $\rho$ quantifies the discrepancy between $r$ and $f(x_t)$. In all conducted experiments, including those under mini-batch settings and within highly non-convex optimization contexts, $r$ consistently remains below the actual training loss. A significant deviation of $\rho$ from one indicates a high gradient norm, prompting an automatic reduction in the learning rate through multiplication by $\rho$. Conversely, when $\rho$ approaches 1, it suggests that the approximation of the loss of the next step is precise, indicating that the current learning rate is adequate. This mechanism is designed to achieve improved convergence towards the minimum, optimizing the efficiency and effectiveness of the training process.

    \item \textbf{c:} The constant $c$ is defined as a positive number specifically introduced to mitigate computational errors that may arise when the training loss $f(x_t)$ approaches zero—a scenario that can occur due to floating-point imprecision in digital computations. To account for this, $r$ is recalibrated as the approximation of $\sqrt{f(x_t)+c}$. It is important to note that the selection of $c$ influences the behavior of the algorithm: a large value for $c$ eventually guides the algorithm toward Stochastic Gradient Descent (SGD) in the latter stages of training. This convergence is indicated by the ratio $\rho$, which approaches one as $f$ nears zero, thereby aligning the algorithm's performance with that of standard SGD under low-loss conditions.
\end{itemize}
\subsection{Compatibility with other Optimization Techniques}
The proposed method automatically adjusts the learning rate. It is also compatible with other common training techniques such as adding regularizers or momentum. This method is straightforward to implement and can also be applied to other optimization methods, such as Adam.
\subsection{A Direct Application of $r:$ An energy-based self-adaptive scheduler}
When training a model like neural networks, intuitively decreasing the learning rate as the training process nears the global minimum is crucial to mitigate the risk of overshooting. Popular methods such as the linear scheduler or CosineAnnealingLR are commonly implemented in practical applications. However, a significant limitation of these approaches is to predefine hyperparameters and to set up rules prior to training. These preset configurations are often not directly related to the specific tasks of the current training process and finding optimal values for these hyperparameters can be challenging.

The proposed method introduces monotonically decreasing metric $r$ to measure the distance to the minimum. Besides, the algorithm illustrates good stability with a large learning rate because of the self-adjustable property. This allows the user to initiate the training with a relatively high default learning rate, facilitating rapid early-stage convergence. As training progresses and the metric $d=\sqrt{r^2-c}$ falls below the default learning rate, the constant learning rate can be replaced by $d$. Consequently, the learning rate decreases automatically as the training approaches the minimum, directly correlating the adjustment to the training dynamics without the need for predefined schedules.

This strategy does not require any additional computation of new function values or gradients, thereby incurring no extra computational costs. It illustrates faster convergence and improved performance in various training tasks, which will be demonstrated in the experimental section.

\section{Theoretical Analysis} \label{sec:theorem}
This section provides a theoretical analysis of the stochastic gradient Scalar Auxiliary Variable (SAV) algorithm. This analysis will provide intuition about the empirical performance of the proposed method outlined in Algorithm $1$, including its stability, adaptive convergence, and the use of the scalar variable~$r_t$ to adjust the learning rate. Note that the analysis in this section and Appendix~\ref{appendix:theoretical-analysis} can be expanded to cover the vector-based (VAV) and relaxed cases, though this would involve more technical details as described in the final remark of Appendix~\ref{appendix:theoretical-analysis}.

We start by showing that the SAV numerical scheme is stable. In other words, in expected value, SAV is unconditionally energy dissipative.
\begin{theorem} \label{thm:energy-dissipation-main}
The SAV algorithm is unconditionally energy dissipative, that is
$$\mathbb{E}[|r_{t+1}|^2] - \mathbb{E}[| r_{t}|^2] = -\mathbb{E}[|r_{t+1} - r_t|^2]  - \frac{1}{\eta} \|x_{t+1} - x_t\|^2 \le 0.$$
\end{theorem}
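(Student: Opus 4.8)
The plan is to establish a pathwise (deterministic, conditional on the drawn sample $\xi_t$) identity and then take expectations; the stochasticity plays essentially no role beyond a final averaging step. The starting point is the elementary scalar identity
\begin{equation*}
|r_{t+1}|^2 - |r_t|^2 = 2\,r_{t+1}(r_{t+1} - r_t) - |r_{t+1} - r_t|^2,
\end{equation*}
valid for the SAV scheme where $r_t$ is a scalar. This reduces the claim to showing that $2\,r_{t+1}(r_{t+1}-r_t) = -\tfrac1\eta\|x_{t+1}-x_t\|^2$, after which the desired equality and the sign follow immediately.

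Next I would substitute the two relations in the SAV dynamics \eqref{equation:SAV}. From the second equation, $r_{t+1}-r_t = \tfrac{1}{2\sqrt{f(x_t;\xi_t)}}\langle G_t,\, x_{t+1}-x_t\rangle$, so that $2\,r_{t+1}(r_{t+1}-r_t) = \tfrac{r_{t+1}}{\sqrt{f(x_t;\xi_t)}}\langle G_t,\, x_{t+1}-x_t\rangle$. From the first equation, $\tfrac{r_{t+1}}{\sqrt{f(x_t;\xi_t)}}G_t = -\tfrac1\eta(x_{t+1}-x_t)$; inserting this into the previous line gives $2\,r_{t+1}(r_{t+1}-r_t) = -\tfrac1\eta\langle x_{t+1}-x_t,\, x_{t+1}-x_t\rangle = -\tfrac1\eta\|x_{t+1}-x_t\|^2$, exactly as needed. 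Combining with the first display yields the pathwise identity
\begin{equation*}
|r_{t+1}|^2 - |r_t|^2 = -|r_{t+1}-r_t|^2 - \tfrac1\eta\|x_{t+1}-x_t\|^2 \le 0,
\end{equation*}
and taking expectation over $\xi_t$ (and the filtration generated by the history) gives the stated result, the right-hand side remaining nonpositive as the expectation of a nonpositive random variable.

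There is no deep obstacle here: the argument is a two-line substitution. The only points that require care are (i) the well-posedness of the divisions by $\sqrt{f(x_t;\xi_t)}$, which is precisely why the implemented algorithm uses the shifted loss $f(x_t;\xi_t)+c$ with $c>0$ — I would either assume $f>0$ along the trajectory or carry out the identical computation with $f$ replaced by $f+c$; (ii) the observation that the identity is entirely pathwise, so no independence or unbiasedness assumption on $G_t$ is invoked, and the expectation is purely cosmetic; and (iii) in the statement the term $\|x_{t+1}-x_t\|^2$ should be understood as $\mathbb{E}\|x_{t+1}-x_t\|^2$, since $x_{t+1}$ depends on the random $\xi_t$. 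The extension to the vector/relaxed (VAV) case would proceed componentwise and additionally invoke the defining inequality for $\omega_{0,i}$ to control the relaxation step, as the paper notes.
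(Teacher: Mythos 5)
Your proposal is correct and follows essentially the same route as the paper: the paper multiplies the state update by $(x_{t+1}-x_t)$ and the auxiliary-variable update by $2r_{t+1}$ and adds, which is algebraically identical to your substitution via the identity $|r_{t+1}|^2-|r_t|^2 = 2r_{t+1}(r_{t+1}-r_t)-|r_{t+1}-r_t|^2$, followed by taking expectations of the resulting pathwise identity. Your side remarks (the shift by $c$, the purely cosmetic role of the expectation, and the missing expectation on $\|x_{t+1}-x_t\|^2$ in the statement) are accurate observations consistent with the appendix version of the scheme.
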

We provide the proof of the above theorem in Appendix~\ref{appendix:SG-SAV-stable}. 

We now proceed to study the convergence of the stochastic gradient SAV numerical scheme. To this end, we need to establish first the following six assumptions.
\begin{enumerate}
\item[(A.1)] \textbf{Independent random vectors:} The random vectors $\xi_t,~t=0,1,2,\ldots$, are independent of each other and also of $x_t$.
\item[(A.2)] \textbf{Unbiased gradients and function values:} For each time $t$, the stochastic gradient and the stochastic function value, $G_t$, and $f(x_t,\xi_t)$, are unbiased estimates of $\nabla f(x_t)$ and $f(x_t)$, i.e., $\mathbb{E}_{\xi_t}[G_t] = \nabla f(x_t)$ and $\mathbb{E}_{\xi_t}[f(x_t,\xi_t)] = f(x_t).$
\item[(A.3)] \textbf{Bounded variance:} For a fixed constant $\sigma_0$, the variance of the gradient $G(x_t,\xi_t)$, at any time $t$, satisfies:
    $$
    \mathbb{E}_{\xi_k}[\|G(x_t,;\xi_t) - \nabla f(x_t) \|^2] \le\sigma_0^2.
    $$
\item[(A.4)] \textbf{Bounded function values:} For fixed constants $a$ and $B$, the function $f$ satisfies $0 < a \le f(\cdot)+c \le B$.
\item[(A.5)] \textbf{Bounded gradients:} For a fixed constant $\gamma$, the gradient $\nabla f(x_t)$, at any time $t$, satisfies $\|\nabla f(x) \|^2 \le \gamma^2.$
\item[(A.6)] \textbf{$L$-smoothness:} The function $f$ is $L$-smooth, i.e., has Lipschitz continuous gradients. That is, for every $x,x' \in \mathbb{R}^n$, $\|\nabla f(x) - \nabla f(x') \| \le L \|x-x'\|$.
\end{enumerate}
The rate of convergence for stochastic gradient SAV is then given in the following theorem.
\begin{theorem} \label{thm:SG-SAV-convergence-main}
Let Assumptions (A-1)-(A-6) hold. Then for SAV we have that with probability at least $1-\delta$,  
$$
\min_{t \in [T-1]} \|\nabla f(x_t)\|^{2}  \le \left( \frac{\sqrt{B}}{T}\mathbb{E} \left[ \frac{1}{r_T} \right]\right) \frac{C_F}{\delta^{3/2}},
$$
where 
$$
C_F = \frac{f(x_0) - f^*}{\eta} +  \frac{\gamma^2}{ \sqrt{a}}\sqrt{f(x_0) + c} + \frac{4\sqrt{f(x_0) + c}B}{\eta \sqrt{a}} + \frac{2}{\sqrt{a}}\sqrt{T} \sigma_o \sqrt{2\frac{(f(x_0) + c)B}{\eta}} + \frac{L (f(x_0) + c) B}{a}.
$$
\end{theorem}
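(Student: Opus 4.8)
The plan is to leverage the energy dissipation identity from Theorem~\ref{thm:energy-dissipation-main} to control the accumulated movement $\sum_t \|x_{t+1}-x_t\|^2$, then translate that movement bound into a bound on $\sum_t \|\nabla f(x_t)\|^2$ via the update rule, and finally convert the expectation estimate into a high-probability statement with Markov's inequality. First, I would telescope the dissipation law: since $\mathbb{E}[|r_{t+1}|^2] - \mathbb{E}[|r_t|^2] \le -\frac{1}{\eta}\|x_{t+1}-x_t\|^2$, summing from $t=0$ to $T-1$ gives $\frac{1}{\eta}\sum_{t=0}^{T-1}\|x_{t+1}-x_t\|^2 \le |r_0|^2 - \mathbb{E}[|r_T|^2] \le f(x_0)+c$, using $r_0 = \sqrt{f(x_0)+c}$ and $r_T \ge 0$. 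This is the clean part.

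Next I would relate $\|x_{t+1}-x_t\|$ to $\|\nabla f(x_t)\|$. From the first equation of \eqref{equation:SAV}, $x_{t+1}-x_t = -\eta \frac{r_{t+1}}{\sqrt{f(x_t;\xi_t)}} G_t$, so $\|x_{t+1}-x_t\| = \eta \frac{r_{t+1}}{\sqrt{f(x_t;\xi_t)}}\|G_t\|$. Taking conditional expectation, using (A.2) to replace $G_t$ by $\nabla f(x_t)$ up to the variance term (A.3), and using (A.4) to bound $\sqrt{f(x_t;\xi_t)} \le \sqrt{B}$ from above and $\frac{1}{\sqrt{f(x_t;\xi_t)}}$ via $a \le f+c$, I can produce a lower bound of the form $\mathbb{E}\!\left[\|x_{t+1}-x_t\| \cdot \text{(something)}\right] \gtrsim \eta \, r_{t+1}\, \|\nabla f(x_t)\|^2 / \sqrt{B}$ after also handling the $r_{t+1}$ factor. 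The delicate issue is that $r_{t+1}$ appears multiplied with $\|x_{t+1}-x_t\|$ and is itself random and coupled to $G_t$; I would use the second equation of \eqref{equation:SAV} to express $r_{t+1}-r_t$ in terms of the increment and the $L$-smoothness/descent estimate, bounding $|r_{t+1}-r_t|$ so that $r_{t+1}$ can be compared to $r_t$ and ultimately to the running quantity $\mathbb{E}[1/r_T]$ that appears in the statement. This is where the terms $\frac{L(f(x_0)+c)B}{a}$, $\frac{\gamma^2}{\sqrt{a}}\sqrt{f(x_0)+c}$, and the $\sqrt{T}\sigma_0$ variance term in $C_F$ will be generated — each coming from, respectively, the Lipschitz correction between $f(x_{t+1})$ and its linearization, the gradient-norm bound used when discarding cross terms, and the cumulative stochastic noise.

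Then I would sum the per-step inequalities over $t \in [T-1]$, combine with the telescoped movement bound $\frac{1}{\eta}\sum\|x_{t+1}-x_t\|^2 \le f(x_0)+c$ and with a Cauchy–Schwarz step linking $\sum \|x_{t+1}-x_t\|$ to $\big(T\sum\|x_{t+1}-x_t\|^2\big)^{1/2}$, to obtain $\sum_{t=0}^{T-1} r_{t+1}\|\nabla f(x_t)\|^2 \le \sqrt{B}\, C_F$ in expectation (after collecting constants into $C_F$). Lower-bounding $r_{t+1} \ge r_T$ (monotonicity of $r_t$, which follows from Theorem~\ref{thm:energy-dissipation-main} since $|r_{t+1}|^2 \le |r_t|^2$ and, with the sign conventions of Algorithm~1, $r_t > 0$) pulls $r_T$ out of the sum, giving $\mathbb{E}\big[r_T \sum_t \|\nabla f(x_t)\|^2\big] \le \sqrt{B}\,C_F$, hence $\mathbb{E}\big[r_T\big]\cdot \min_t\|\nabla f(x_t)\|^2 \le \frac{\sqrt B}{T}C_F$ is too crude — instead I keep $\min_t\|\nabla f(x_t)\|^2 \cdot \sum_t r_{t+1}$ and divide, arriving at $\min_t\|\nabla f(x_t)\|^2 \le \frac{\sqrt B \, C_F}{T\, r_T}$ pathwise up to the noise handling, so that $\mathbb{E}[1/r_T]$ emerges naturally.

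Finally, the high-probability conversion: the factor $\delta^{-3/2}$ suggests applying Markov's inequality to a nonnegative random variable in a form that loses a $\delta^{-1}$ on the $1/r_T$ expectation and additional $\delta^{-1/2}$ factors when controlling the stochastic error term (which is an $\ell^2$-type martingale sum whose tail, via Markov on its second moment, contributes $\delta^{-1/2}$). I would isolate the random error, bound its variance using (A.1)–(A.3), and apply a union bound over the (at most two or three) bad events to get the claimed $1-\delta$ probability with the stated power of $\delta$. \textbf{The main obstacle} I anticipate is step two: carefully decoupling $r_{t+1}$ from the stochastic gradient $G_t$ in the product $r_{t+1}\|x_{t+1}-x_t\|$ while keeping the constants sharp enough to match $C_F$ — this requires using the $r$-update equation to trade the $r_{t+1}$ factor for $r_t$ plus an explicitly bounded increment, and being careful that conditional expectations are taken in the right order given (A.1).
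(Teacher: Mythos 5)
Your overall scaffolding (dissipation law to control $\sum_t\|x_{t+1}-x_t\|^2$, per-step descent-type inequality, summation, Markov at the end) points in the right direction, but two steps that you flag as ``delicate'' and then defer are in fact the core of the paper's proof, and your sketches of them would not go through as written.

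First, the decoupling of $r_{t+1}$ (equivalently $\rho_t = r_{t+1}/\sqrt{\tilde f_t+c}$) from $G_t$. You propose to take a conditional expectation and ``replace $G_t$ by $\nabla f(x_t)$ up to the variance term,'' but this is exactly what is not allowed: $\rho_t$ depends on $\xi_t$, so $\mathbb{E}_{\xi_t}[\rho_t G_t]\neq \rho_t\nabla f(x_t)$, and no amount of bounding $|r_{t+1}-r_t|$ by itself fixes the conditional-measurability problem. The paper resolves this with the AdaGrad-style surrogate trick of Ward et al.: replace $\rho_t$ by the past-measurable $\rho_{t-1}$, so that the main term becomes $-\eta\,\rho_{t-1}\|\nabla f(x_t)\|^2$ exactly, and then bound the residual $\mathbb{E}_{\xi_t}[\langle\nabla f(x_t),\eta(\rho_{t-1}-\rho_t)G_t\rangle]$ by splitting $\rho_{t-1}-\rho_t$ into a piece proportional to $r_t-r_{t+1}\ge 0$ (telescopes against the dissipation law) and a piece proportional to $1/\tilde F_{t-1}-1/\tilde F_t$ (controlled by Assumption (A.4)). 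You gesture at ``trading $r_{t+1}$ for $r_t$ plus a bounded increment,'' which is in the right spirit, but without the surrogate structure the clean negative term never appears.

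Second, and more seriously, the extraction of $\mathbb{E}[1/r_T]$. After summation one has an \emph{expectation} bound $\eta\sum_t\mathbb{E}[r_t\|\nabla f(x_t)\|^2]\le \sqrt{B}\,C_F\,\eta$ (up to constants); you cannot ``divide pathwise by $r_T$'' inside that expectation to make $\mathbb{E}[1/r_T]$ ``emerge naturally'' --- $\mathbb{E}[r_t\|\nabla f\|^2]\ge \mathbb{E}[\|\nabla f\|^2]/\mathbb{E}[1/r_t]$ is false in general. The paper's mechanism is a specific application of H\"older's inequality with exponents $(3/2,3)$, choosing $X=(r_t\|\nabla f(x_t)\|^2)^{2/3}$ and $Y=(1/r_t)^{2/3}$, which yields
$$\mathbb{E}\left[r_t\|\nabla f(x_t)\|^2\right]\;\ge\;\frac{\left(\mathbb{E}\|\nabla f(x_t)\|^{4/3}\right)^{3/2}}{\mathbb{E}[1/r_T]},$$
together with the positivity of $r_t$ from Theorem~\ref{thm:r-strictly-positive-main}. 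This is also the sole source of the $\delta^{3/2}$: the final bound controls $\min_t(\mathbb{E}\|\nabla f(x_t)\|^{4/3})^{3/2}$, and a single Markov inequality applied to $\|\nabla f(x_t)\|^{4/3}=(\|\nabla f(x_t)\|^2)^{2/3}$ at threshold $(C_N/\delta^{3/2})^{2/3}=C_N^{2/3}/\delta$ gives the claim --- not a union bound over several bad events losing $\delta^{-1}$ and $\delta^{-1/2}$ separately, as you suggest. Without the H\"older step your argument cannot produce the stated right-hand side.
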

The complete proof of Theorem~\ref{thm:SG-SAV-convergence-main} can be found in the Appendix. Also, note that if we let $C_N:= C_F \left(\sqrt{B}/T \mathbb{E}([1/r_T]\right)$, then the above result has the following probabilistic form:
$$\mathbb{P} \left( \min_{t \in [T-1]} \|\nabla f(x_t)\|^2 \ge \frac{C_N}{\delta^{3/2}} \right) \le \delta.$$
The above results clearly show that with a very low probability the proposed bound is violated. Note that this is a consequence of the stochastic nature of our algorithm. 

We will now concentrate on analyzing the main challenge associated with the proof. Note that we start the proof by applying the well-known expected descent lemma:
\begin{align*} 
\mathbb{E}_{\xi_t}[f(x_{t+1})] & \le f(x_t) + \mathbb{E}_{\xi_t} \left[\langle \nabla f(x_t), x_{t+1} - x_t\rangle + \frac{L}{2} \|x_{t+1} - x_t\|^2 \right] \\
& = f(x_t) + \underbrace{\mathbb{E}_{\xi_t} \left[ \left\langle \nabla f(x_t), -\eta \rho_t G_t \right\rangle  \right]}_{=A} + \underbrace{\frac{L}{2} \eta^2 \mathbb{E}_{\xi_t} \left[ \rho_t^2\|G_t\|^2 \right]}_{=B},    
\end{align*}
where $\rho_t = \frac{r_{t+1}}{\sqrt{\tilde{f}_t + c}}$ is the scalar multiplicative factor of the learning rate, $\mathbb{E}_{\xi_t}[\cdot] $ is the expectation with respect to $\xi_t$ conditioned on previous values $\xi_0, \dots, \xi_{t-1}$, and $\tilde{f}_t$ is a simplified notation for $f(x_t, \xi_t)$.

In traditional stochastic gradient descent (SGD), the corresponding term $A$ is $-\eta \|\nabla f(x_t) \|^2$, which is negative - a desirable property. However, in SAV, $A$ does not admit a clear form due to the correlation between $\rho_t$ and $G_t$, making the proof of SAV more challenging compared to SGD. 

To address this challenge, we follow the approach presented in \cite{ward2020adagrad,wang2023convergence} and approximate $\rho_t$ with a surrogate (in our case $\rho_{t-1}$). By using the surrogate, the above inequality takes the form:
\begin{align*} 
\mathbb{E}_{\xi_t}[f(x_{t+1})]  \le f(x_t) + \underbrace{(-\eta) \rho_{t-1} \|\nabla f(x_t) \|^2}_{=A_1}  + \underbrace{\mathbb{E}_{\xi_t} [\left\langle \nabla f(x_t), \eta(\rho_{t-1} - \rho_{t}) G_t \right\rangle]}_{=A_2} + \underbrace{\frac{L}{2} \eta^2 \mathbb{E}_{\xi_t} \left[\rho_t^2\|G_t\|^2 \right]}_{=B}.
\end{align*}
Unlike SGD, the above inequality includes an additional \textit{error term} $A_2$. We will establish a bound for the error term $A_2$ in the supplementary material. Once this bound is established, the convergence proof of Theorem~\ref{thm:SG-SAV-convergence-main} follows naturally, as detailed in Appendix~\ref{appendix:SG-SAV-convergence}. However, this proof still depends on the existence of a positive lower bound for the scalar auxiliary variable, i.e., $r_t > 0$ for all $t$. We will establish this lower bound in the following theorem.
\begin{theorem} \label{thm:r-strictly-positive-main}
Let Assumptions (A.1)-(A.6) hold. In addition, let the variance of the stochastic function value be bounded by~$\sigma_f^2$. Then, if we select the learning rate~$\eta >0$ such that
$$
C_1 \sqrt{\eta} + C_2 \eta + C_3 \le 0,
$$
where 
\begin{align*}
C_1 &= F(x_0)\sqrt{T \left(\frac{\gamma^2}{8a^3}\sigma_f^2 + \frac{1}{2a} \sigma_0^2 \right)},~C_2 = \frac{L_F  F^2(x_0)}{2},~\text{and } C_3 = \frac{1}{2 \sqrt{a}} \sigma_f - \frac{\sqrt{a}}{2}.
        \end{align*}
        Then, we have the following bound:
        $$\mathbb{E}[r_T] \ge \frac{\sqrt{a}}{2} > 0.$$
\end{theorem}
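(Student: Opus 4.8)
The plan is to track the evolution of $\mathbb{E}[r_t]$ across iterations and show it cannot fall below $\sqrt{a}/2$ under the stated smallness condition on $\eta$. First I would observe that, by definition of $r_0 = \sqrt{f(x_0)+c}$ and Assumption~(A.4), we have $r_0 \ge \sqrt{a}$, so the claim holds with room to spare at $t=0$; the goal is to control how much $r_t$ can decrease over $T$ steps. The natural starting point is the update equation for $\tilde{r}_{t+1}$ in \eqref{equation:ERSAV} (scalar version), from which one derives, after substituting the $x$-update, the identity $r_{t+1} - r_t = -\tfrac{1}{2\sqrt{\tilde f_t+c}}\langle G_t, \eta \rho_t G_t\rangle + (\text{relaxation correction})$, and more usefully a telescoping expression. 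I would aim to write $r_T = r_0 + \sum_{t=0}^{T-1}(r_{t+1}-r_t)$ and then lower-bound the expectation of each increment.

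The key steps, in order: (1) From \eqref{equation:ERSAV}, express $\tilde r_{t+1}-r_t$ exactly as $-\tfrac{\eta}{2(\tilde f_t+c)}\rho_t \|G_t\|^2 \cdot \sqrt{\tilde f_t + c}/\sqrt{\tilde f_t+c}$-type term — really $\tilde r_{t+1} = (1+2\eta(g_t/(f_t+c))^2)^{-1} r_t$ in the deterministic-gradient presentation of Algorithm~1 — and bound $|\tilde r_{t+1}-r_t|$ using (A.3)–(A.5) to get $|\tilde r_{t+1}-r_t| \le \eta \cdot (\text{const depending on } \gamma^2,\sigma_0^2,a) \cdot r_t$, with $r_t \le F(x_0)$ by the dissipation Theorem~\ref{thm:energy-dissipation-main} (which gives $r_t^2 \le r_0^2 = f(x_0)+c =: F^2(x_0)$). (2) Bound the relaxation step $r_{t+1}-\tilde r_{t+1} = (1-\omega_{0})(\sqrt{f(x_{t+1})+c}-\tilde r_{t+1})$: since $\sqrt{f(x_{t+1})+c}$ differs from $\sqrt{f(x_t)+c}$ by at most $\tfrac{L}{2\sqrt a}\|x_{t+1}-x_t\|^2 + \tfrac{\gamma}{\sqrt a}\|x_{t+1}-x_t\|$ via $L$-smoothness (A.6) plus (A.4)–(A.5), and $\|x_{t+1}-x_t\| = \eta\rho_t\|G_t\|$, this contributes an $O(\sqrt\eta \cdot \sigma_0 + \eta)$ term in expectation after a Cauchy–Schwarz / Jensen step. (3) Introduce the stochastic function value: $\mathbb{E}[\,|\sqrt{\tilde f_t+c}-\sqrt{f(x_t)+c}|\,] \le \tfrac{1}{2\sqrt a}\sigma_f$ using the bounded variance of $f(x_t,\xi_t)$ and $\sqrt{u}-\sqrt{v} \le |u-v|/(2\sqrt{\min})$; this is where the $\sigma_f$ terms in $C_1$ and $C_3$ come from. (4) Sum the bounds over $t=0,\dots,T-1$: using $r_t \le F(x_0)$, the sum of the per-step decrements is at most $C_1'\sqrt{\eta}\,\sqrt{T} \cdot(\ldots) + C_2'\eta T + \ldots$; repackaging with the displayed $C_1, C_2, C_3$ gives $\mathbb{E}[r_T] \ge \sqrt a - (C_1\sqrt\eta + C_2\eta + C_3) - \tfrac{\sqrt a}{2} \cdot(\text{bookkeeping})$, so that the hypothesis $C_1\sqrt\eta + C_2\eta + C_3 \le 0$ forces $\mathbb{E}[r_T] \ge \sqrt a/2$. (5) Conclude.

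The main obstacle I anticipate is controlling the accumulated error from the discrepancy between the stochastic loss $\tilde f_t = f(x_t,\xi_t)$ appearing in the denominators of the scheme and the true loss $f(x_t)$: because $\rho_t$, $G_t$, and $\tilde f_t$ are all correlated through $\xi_t$, one cannot naively take conditional expectations term by term. I would handle this exactly as the excerpt suggests for $A_2$ in Theorem~\ref{thm:SG-SAV-convergence-main} — replace $\rho_t$ by the $\xi_t$-measurable surrogate $\rho_{t-1}$, pay an error term, and bound that error using (A.3), (A.6), and the bounded variance $\sigma_f^2$ — and then apply the law of total expectation carefully so that the $\sqrt{T}$ factor (rather than $T$) emerges from a martingale-type cancellation in the cross terms. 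A secondary technical point is justifying the uniform bound $r_t \le F(x_0)$ inside expectations, which follows immediately from Theorem~\ref{thm:energy-dissipation-main} since $\mathbb{E}[r_{t+1}^2] \le \mathbb{E}[r_t^2] \le r_0^2$, and a Jensen step then also controls $\mathbb{E}[r_t]$ from above when needed. Once the per-step decrement bound of the form $\mathbb{E}[r_t] - \mathbb{E}[r_{t+1}] \le \eta\,\alpha_t + \sqrt\eta\,\beta_t$ with summable-in-the-right-way $\alpha_t,\beta_t$ is in hand, the rest is the routine telescoping and collection of constants indicated above.
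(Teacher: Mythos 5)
Your overall plan --- telescope $r_T = r_0 + \sum_{t=0}^{T-1}(r_{t+1}-r_t)$ and control the total decrease --- starts in the right place, but the way you propose to control that sum cannot work. Bounding each increment separately, $|r_{t+1}-r_t|\le \eta\cdot C(\gamma,\sigma_0,a)\cdot r_t$, and summing gives a total decrease of order $\eta T\gamma^2$ plus variance terms; indeed your own step (4) produces a $C_2'\eta T$ contribution. The theorem's hypothesis does not control $\eta T$: the only $T$-dependence in the admissible-step-size condition sits inside $C_1\propto\sqrt{T\left(\tfrac{\gamma^2}{8a^3}\sigma_f^2+\tfrac{1}{2a}\sigma_0^2\right)}$, i.e.\ it constrains $\sqrt{\eta T}$ times the \emph{variances} only. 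In the low-noise regime $\sigma_0,\sigma_f\to 0$ the condition reduces to a bound on $\eta$ alone, independent of $T$, so the conclusion must hold for fixed $\eta$ and arbitrarily large $T$; an $O(\eta T\gamma^2)$ estimate of the decrease is then vacuous. You cannot ``repackage'' $C_2'\eta T$ into the stated $C_2\eta=\tfrac{L_F F^2(x_0)}{2}\eta$, which carries no factor of $T$.

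The missing idea is that $\sum_t(r_{t+1}-r_t)=\sum_t\frac{1}{2\tilde F_t}\langle G_t,x_{t+1}-x_t\rangle$ is, up to stochastic and discretization error, a discrete line integral of $\nabla F$ with $F(x):=\sqrt{f(x)+c}$, so it telescopes to approximately $F(x_T)-F(x_0)\ge\sqrt a-F(x_0)$ --- a $T$-independent lower bound coming from $f+c\ge a$ evaluated at the \emph{terminal} iterate, not at $x_0$. The paper's proof accordingly (i) shows $F$ is $L_F$-smooth (this is where the constant $L_F$ in $C_2$ originates), (ii) writes $\mathbb{E}[F(x_{T})]-\mathbb{E}[F(x_0)]\le A+\bigl(\mathbb{E}[r_T]-\mathbb{E}[r_0]\bigr)+C$, where $A$ collects the mismatch $\nabla F(x_t)-G_t/2\tilde F_t$ and $C$ is the quadratic remainder, and (iii) bounds $A$ and $C$ uniformly in $T$ using the energy dissipation law $\sum_t\|x_{t+1}-x_t\|^2\le\eta\, r_0^2$ from Theorem~\ref{thm:energy-dissipation-main}: Cauchy--Schwarz on $A$ yields the $\sqrt{T}\cdot\sqrt{\eta}$ variance term ($C_1\sqrt\eta$), and $C\le L_F\eta F^2(x_0)/2$ gives $C_2\eta$. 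Your proposal uses the dissipation theorem only for the crude bound $r_t\le F(x_0)$ and never for $\sum_t\|x_{t+1}-x_t\|^2$, never introduces the potential $F=\sqrt{f+c}$ or its smoothness constant, and never exploits the lower bound on $f$ at $x_T$; without these three ingredients the argument cannot close. Two secondary points: the appendix theorem is proved for the unrelaxed SG-SAV scheme, so the $\omega_0$ relaxation bookkeeping in your steps (1)--(2) is not needed, and the $\sqrt T$ factor in the paper comes from Cauchy--Schwarz against the dissipation bound rather than from a martingale cancellation.
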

We provide the details of these proofs in the appendix.

\section{Numerical Experiments} \label{sec:Experiments}

\subsection{Toy Example: Rosenbrock function}

The Rosenbrock function, also known as Rosenbrock's valley, is a popular test problem for optimization algorithms. It is a non-convex function designed to evaluate the performance of the optimization techniques. The function is 
$$ f(x, y) = (a - x)^2 + b(y - x^2)^2 $$
with $ a=1, b=100 $, with global minimum at $(x, y)=(1, 1)$ where the function value is zero. We test SGD and the proposed method on this problem with starting point $(-2, -2)$. The outcomes are summarized in Table $1$.

\begin{table}[ht]
\centering
\caption{Result of Rosenbrock Function}
\label{my-table}
\begin{tabular}{ccccc}
\hline
         & learning rate   & Converge or not & Iterations & End Point         \\ \hline
SGD      & 0.01            & No              & nan        & nan               \\
SGD      & 0.005           & Yes             & 15,000      & (0.9846, 0.9693)  \\
VAV    & 0.04            & Yes             & 15,000      & (0.9964, 0.9931)  \\
VAV    & 0.005           & Yes             & 15,000      & (0.9843, 0.9688)  \\ \hline
\end{tabular}
\end{table}

\subsection{Regression Task}

In the domain of machine learning, regression tasks constitute a crucial area of application, particularly within the scope of scientific computing research. One of the recently favored approaches for solving Partial Differential Equations (PDEs) is the Physics-Informed Neural Network (PINN). In this paper, we apply a PINN to model the 1D Burgers' equation and 1D Allen-Cahn equation.

\subsubsection{Burgers' Equation}
In the first part of our regression task, we apply PINN to model the 1D Burgers' equation, a common nonlinear PDE used to describe fluid dynamics and shock waves. The equation is given as follows:

$$ \frac{\partial u}{\partial t} + u\frac{\partial u}{\partial x} = \nu\frac{\partial^2u}{\partial x^2}, \nu=0.01 $$

The initial condition is defined as:

$$ u(x, 0) = sin(\pi x) $$

The spatial domain for our study is defined from $-1$ to $1$ and the temporal domain from $0$ to $0.5$. The training process utilizes $10,000$ collocation points distributed across these domains. Our model architecture is a fully connected neural network with eight hidden layers, each containing $20$ neurons. For evaluating regression performance, test loss serves as the primary metric. The outcomes of the training and test losses are detailed in Figure $1$.

\begin{figure}[ht]
  \centering
  \includegraphics[width=1\textwidth]{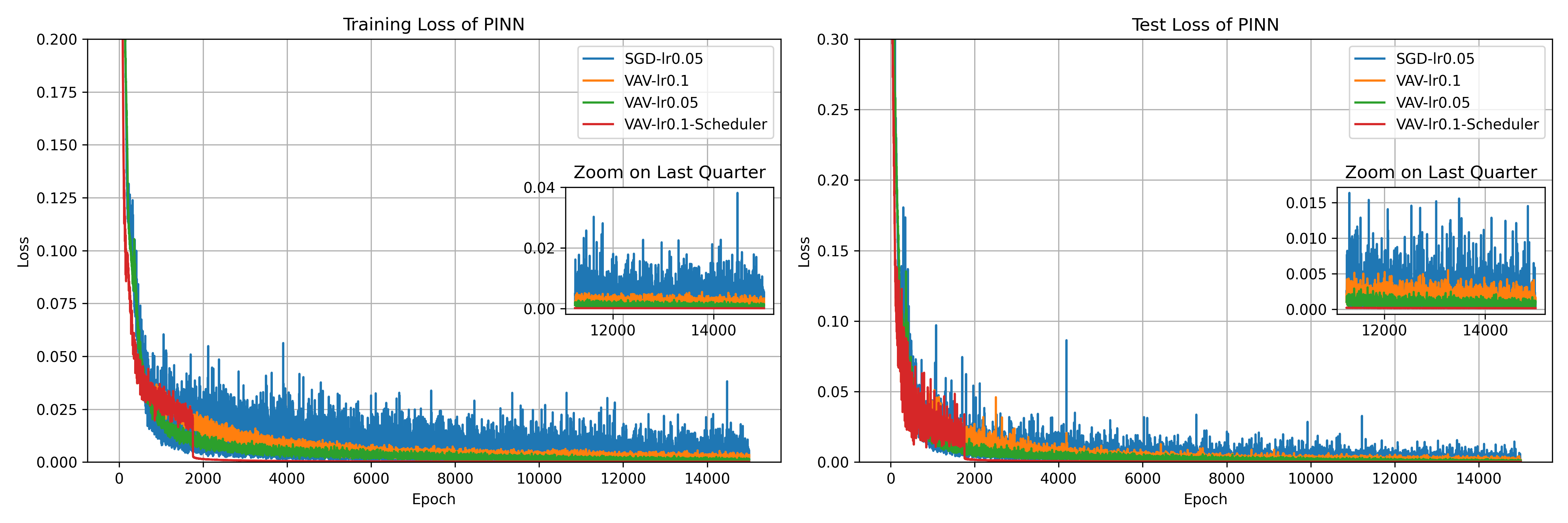}
  \caption{Training (left) and Testing Loss (right) of PINN on Burger's Equation.}
  \label{fig:Burger}
\end{figure}

\subsubsection{Allen-Cahn Equation}
For a second example, we consider the 1D Allen-Cahn equation, another important nonlinear PDE that describes phase separation processes in multi-component alloy systems. The equation is given by:

$$ \frac{\partial u}{\partial t} = \epsilon \frac{\partial^2u}{\partial x^2} + u - u^3, \epsilon = 0.1 $$

The initial condition for this problem is still:

$$ u(x, 0) = sin(\pi x)$$

Similar to the Burgers' equation, the spatial domain is defined from $-1$ to $1$ and the temporal domain is set from $0$ to $2$. We apply the PINN model to solve this equation with the same neural network architecture: eight hidden layers with 20 neurons each. For training, we utilize 2500 collocation points distributed uniformly across the space-time domain.

The model is trained using the same loss functions as in the previous section, where the test loss serves as the primary metric for evaluating regression performance. Figure $2$ provides a comparison of the training and test losses for the Allen-Cahn equation.

\begin{figure}[ht]
  \centering
  \includegraphics[width=1\textwidth]{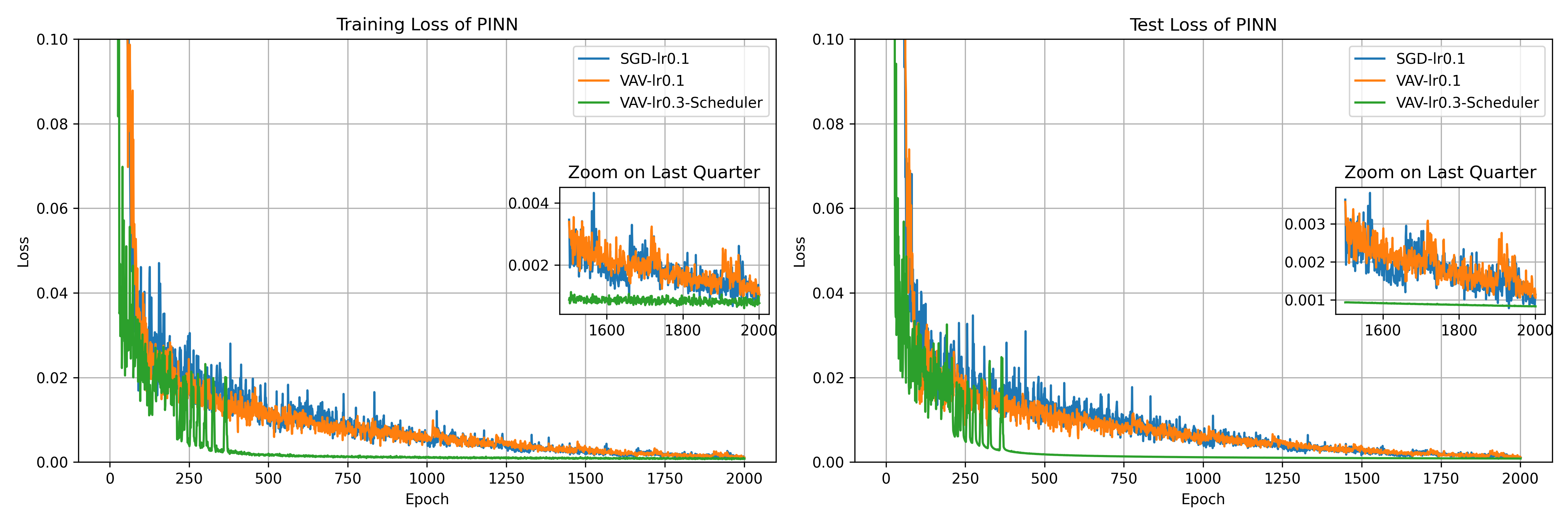}
  \caption{Training (left) and Testing Loss (right) of PINN on Allen-Cahn Equation.}
  \label{fig:Burger}
\end{figure}

\subsection{Classification Task}
We evaluated the performance of our novel optimization algorithm on classification tasks using the CIFAR-10 and CIFAR-100 datasets. CIFAR-10 consists of $60,000$ 32x32 color images in $10$ classes, with $6,000$ images per class, while CIFAR-100 is similar but includes $100$ classes, each containing $600$ images. Both datasets are widely recognized benchmarks in the machine learning community. We utilized the ResNet-50 architecture in \cite{he2015deep}, which contains approximately $25$ million parameters, as the test model. Our experiments were conducted over $200$ epochs with a batch size of $256$, using an initial learning rate of 0.3. We incorporated a popular learning rate scheduling technique that reduces the learning rate by a factor of $10$ at the 150th epoch for SGD and compared it to our proposed scheduler. The comparison results are presented in Figures $3$ and $4$.

\begin{figure}[ht]
  \centering
  \includegraphics[width=1\textwidth]{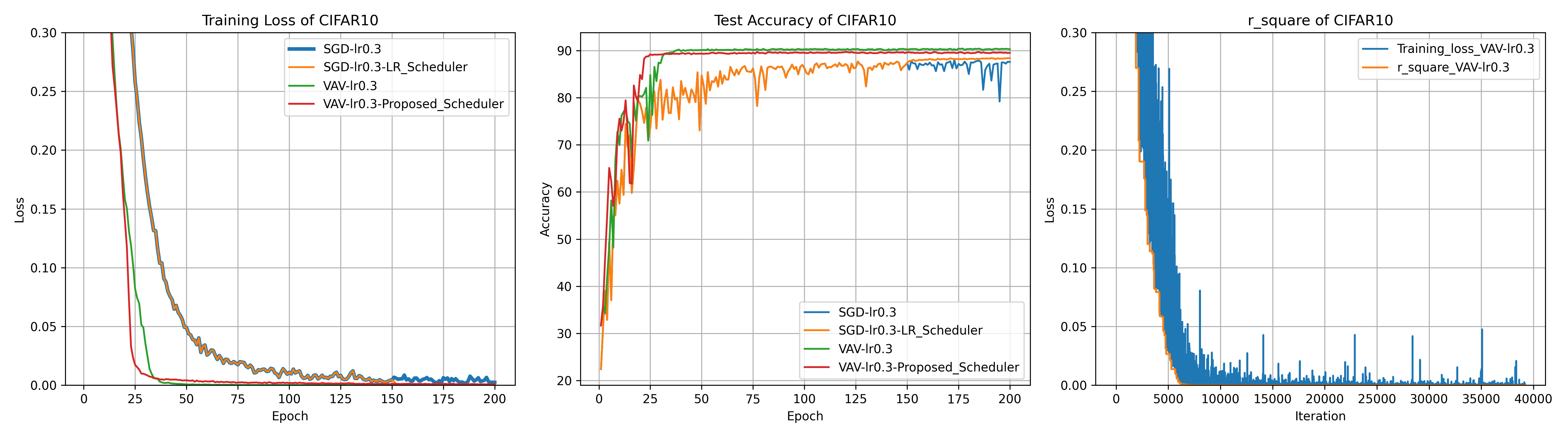}
  \caption{Training Loss per Epoch (Left), Test Accuracy (Center) and r$\_$square per Iteration (Right) on CIFAR10 dataset.}
  \label{fig:Burger}
\end{figure}

\begin{figure}[ht]
  \centering
  \includegraphics[width=1\textwidth]{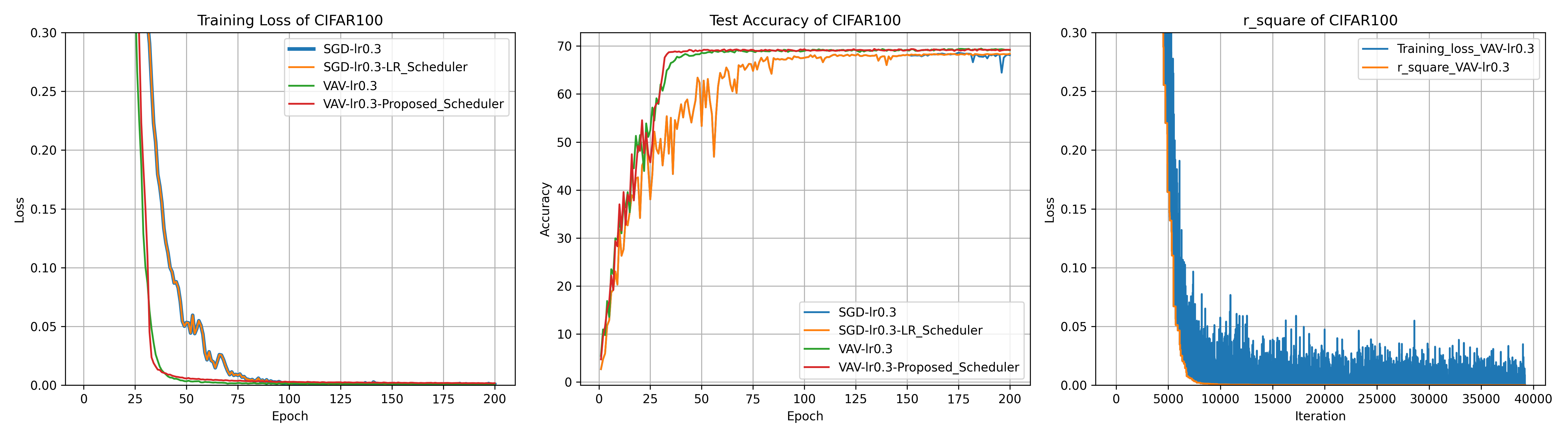}
  \caption{Training Loss per Epoch (Left), Test Accuracy (Center) and r$\_$square per Iteration (Right) on CIFAR100 dataset.}
  \label{fig:Burger}
\end{figure}

\section{Discussion about the numerical experiments} \label{sec:Discussion}

\subsection{Interpreting of the result}

\begin{itemize}
    \item \textbf{Performance:} In this study, the proposed optimization method consistently outperformed SGD across all testing problems. Specifically, in the Rosenbrock function, our method approached a point closer to the global minimum than SGD (0.005 is the highest tolerance learning rate of SGD). In the regression task, the method not only converged more rapidly but also achieved the lowest test error, as depicted by the red line in the figure. In classification tasks, it attained the highest accuracy.

    The proposed scheduler generally enhanced performance in regression tasks, but its effectiveness was not as marked in classification tasks. Although it still outperformed the commonly used MultistepLR, the benefits of applying any scheduler varied depending on different tasks.

    It is important to clarify that we do not claim VAV method guarantees convergence to the global minimum or achieves a superior local minimum compared to SGD. Such assertions would overstate the capabilities of any learning rate adjustment technique. Properly fine-tuned SGD can still yield excellent results, though this falls outside the scope of the current discussion. The objective of this paper is not to establish new state-of-the-art results but to compare the effectiveness of these two methods under identical conditions. Overall, The experiments suggest that with the same settings and almost no additional cost, the VAV method provides a better result compared to SGD.
    
    \item \textbf{Stability on larger learning rate:} In our investigations with toy examples and regression tasks, the proposed optimization method demonstrates its capability to converge to favorable outcomes under significantly larger learning rates, settings in which SGD would typically diverge. This phenomenon is attributed to the self-adaptive properties of the ratio $\rho$. The ability to employ larger learning rates confers several advantages, including the potential for faster exploration of the solution space. This can be particularly beneficial in complex landscapes where smaller learning rates may lead to slow progress or be trapped in suboptimal minima.

    \item \textbf{Variance of Loss:} In all experiments, the variance in both training and testing loss when using the VAV method is notably smaller compared to that observed with SGD. This reduced variance or fluctuation in the training loss often correlates with improved generalizability of the model, as consistent performance during training tends to predict more reliable behavior on unseen data. This may indicate a more reliable final result when deploying the model in real-world applications.
    
    \item \textbf{$r$ as an Empirical Lower Bound:} The last plot in both Figures $2$ and $3$ presents the value of $r^2$ per iteration compared to the training loss. While rigorous proof remains elusive, under the stochastic settings where the loss function varies with each iteration because of the mini-batch, the value of $r^2$ stays close to the training loss and acts as an empirical lower bound. This behavior provides a valuable indicator of the training status and the model's proximity to the optimal minimum. The ability to track this lower bound offers several advantages: it enhances the predictability of the optimization process, aids in diagnosing the training dynamics, and can potentially inform adjustments to improve convergence efficiency. 
\end{itemize}

\subsection{Important Considerations for Implementing the Proposed Method}

\begin{itemize}
    \item \textbf{Numerical Precision Error when Loss is close to zero:} Theorem $4.3$ shows that $r$ has a positive lower bound. This property is critical as it ensures that $r$ does not diminish to zero or become negative as long as the loss function has a positive lower bound. However, deviation might occur in practice when the loss value becomes very close to zero. We believe it is because of the numerical inaccuracies inherent in computational processes by the computer. To resolve this problem, we propose in Section 3 a positive constant $c$ within the algorithm. This adjustment prevents $r$ from approaching zero too closely. It is important for users to note that while adding a larger $c$ value can safeguard against numerical errors, it also accelerates the algorithm's convergence towards the behavior of Stochastic Gradient Descent in the latter stages of training. This adjustment should be carefully calibrated to balance stability with the desired convergence characteristics.
    
    \item \textbf{Batch Size:} A relatively larger batch size is recommended when utilizing the proposed method, particularly when the scheduler is also employed. This is because of the observation that $r$ acts as an empirical lower bound of the training loss. When a batch size is too small, it often results in significant fluctuations or variations in the training loss, which can cause $r$ to decrease quickly. Such a rapid decrease in $r$ may correspondingly cause the learning rate to diminish too fast to a small value, potentially decelerating the training process. To mitigate this issue, users have two possible strategies: incorporating the positive constant $c$, as discussed in Section 3, or using a larger batch size. These adjustments can help maintain a more stable and efficient training trajectory, ensuring the robust performance of the algorithm under varied training dynamics.    
\end{itemize}

\section{Conclusion} \label{sec:Conclusion}

A novel energy-based self-adjustable learning rate optimization method is introduced that dynamically adapts the learning rate based on the training process itself. The VAV method offers improvements in stability and convergence speed over traditional Stochastic Gradient Descent, particularly when handling larger learning rates. The introduced metric $r$ serves as an empirical lower bound for the training loss, guiding the learning rate adjustments to enhance training performance without the need for manual tuning or complex scheduling. We encourage further exploration and application of this method in various machine-learning contexts.

\section*{Acknowledgments}
GL would like to thank the support of National Science Foundation (DMS-2053746, DMS-2134209, ECCS-2328241, CBET-2347401 and OAC-2311848), and U.S.~Department of Energy (DOE) Office of Science Advanced Scientific Computing Research program DE-SC0023161, the Uncertainty Quantification for Multifidelity Operator Learning (MOLUcQ) project (Project No. 81739), and DOE–Fusion Energy Science, under grant number: DE-SC0024583.

\bibliographystyle{plain}
\bibliography{refs}

\appendix
\newpage
\section{Theoretical Analysis} \label{appendix:theoretical-analysis}
We consider the unconstrained optimization problem:
$$
\min_{x \in \mathbb{R}^n}~f(x),
$$
where the function $f:\mathbb{R}^n \to \mathbb{R}$ is continuously differentiable and bounded from below, i.e., $f^* = \inf_{x \in \mathbb{R}^n}f(x) > -c$.

This appendix presents a theoretical analysis of the Stochastic Gradient Scalar Auxiliary Variable (SG-SAV) algorithm, which addresses the above optimization problem. The updated equations for the SG-SAV algorithm are:
\begin{subequations}
\begin{align}
x_{t+1} &= x_t -\eta\frac{ r_{t+1}}{\sqrt{f(x_t;\xi_t) + c}} G_t \label{eq:state} \\ r_{t+1} &= r_t+ \frac{1}{2 \sqrt{f(x_t;\xi_t) + c}} \left\langle G_t,x_{t+1} - x_t\right\rangle. \label{eq:aux-var}
\end{align}    
\end{subequations}
Here, $r_t \in \mathbb{R}_{\ge 0}$ is the scalar auxiliary variable that approximates the energy, $\eta >0$ is the learning rate, $G_t$ is a simplified notation for the stochastic gradient $G(x_t,\xi_t)$, and $\xi_t$ is an independent random variable resulting from mini-batch learning.
\subsection{The SG-SAV Numerical Scheme is Stable} \label{appendix:SG-SAV-stable}
We start by showing that the SG-SAV numerical scheme, defined by equations \eqref{eq:state} - \eqref{eq:aux-var}, is stable. In other words, in expected value, it is unconditionally energy dissipative.
\begin{theorem} \label{thm:energy-dissipation}
The SG-SAV algorithm \eqref{eq:state} - \eqref{eq:aux-var} is unconditionally energy dissipative, that is
$$\mathbb{E}[|r_{t+1}|^2] - \mathbb{E}[| r_{t}|^2] = -\mathbb{E}[|r_{t+1} - r_t|^2]  - \frac{1}{\eta} \|x_{t+1} - x_t\|^2 \le 0.$$
\end{theorem}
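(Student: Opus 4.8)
The plan is to prove the identity pathwise (for each realization of $\xi_t$) and only average at the very end, since the stochastic structure plays no role beyond the final expectation. The backbone is the elementary polarization identity $2b(b-a) = b^2 - a^2 + (b-a)^2$, applied with $a = r_t$ and $b = r_{t+1}$, which gives
$$|r_{t+1}|^2 - |r_t|^2 + |r_{t+1} - r_t|^2 = 2\, r_{t+1}(r_{t+1} - r_t).$$
So it suffices to show that the right-hand side equals $-\frac{1}{\eta}\|x_{t+1}-x_t\|^2$.

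First I would substitute the auxiliary-variable update \eqref{eq:aux-var} for $r_{t+1} - r_t$ into the right-hand side, producing
$$2\, r_{t+1}(r_{t+1}-r_t) = \frac{r_{t+1}}{\sqrt{f(x_t;\xi_t)+c}}\,\langle G_t, x_{t+1}-x_t\rangle = \left\langle \frac{r_{t+1}}{\sqrt{f(x_t;\xi_t)+c}}\, G_t,\; x_{t+1}-x_t \right\rangle.$$
Then I would use the state update \eqref{eq:state} in the rearranged form $\frac{r_{t+1}}{\sqrt{f(x_t;\xi_t)+c}}\,G_t = -\frac{1}{\eta}(x_{t+1}-x_t)$ to replace the first slot of the inner product, obtaining exactly $-\frac{1}{\eta}\|x_{t+1}-x_t\|^2$. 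Combining this with the polarization identity yields
$$|r_{t+1}|^2 - |r_t|^2 = -|r_{t+1}-r_t|^2 - \frac{1}{\eta}\|x_{t+1}-x_t\|^2,$$
which is already $\le 0$ since both terms on the right are non-positive. Taking expectations (and, if needed, invoking the tower property over the full history $\xi_0,\dots,\xi_t$) gives the stated identity and inequality.

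There is essentially no analytic obstacle here: the denominator $\sqrt{f(x_t;\xi_t)+c}$ is strictly positive by Assumption (A.4), so the rearrangement of \eqref{eq:state} is legitimate, and the relation above is an exact identity rather than an estimate. The only point requiring care is bookkeeping — ensuring that the same quantity $r_{t+1}$ (the updated auxiliary variable, not $\tilde r_{t+1}$ or the relaxed value) appears consistently in both \eqref{eq:state} and \eqref{eq:aux-var}, so that the substitution from the state equation matches, up to the scalar $-1/\eta$, the factor generated by the auxiliary-variable equation. Once that alignment is checked, the conclusion is immediate; the relaxed and vector-valued (VAV) variants then follow by carrying out the same computation coordinatewise, summing over $i$, and using the defining inequality for $\omega_{0,i}$, as indicated in the main text.
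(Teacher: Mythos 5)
Your proof is correct and is essentially the same argument as the paper's: the paper multiplies \eqref{eq:state} by $(x_{t+1}-x_t)$ and \eqref{eq:aux-var} by $2r_{t+1}$ and adds, which is exactly your polarization-plus-substitution computation organized in a different order. No gaps; the pathwise identity followed by taking expectations is precisely what the paper does.
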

\begin{proof}
By multiplying equation \eqref{eq:state} by $(x_{t+1} - x_{t})$, we obtain:
\begin{align} \label{eq:mult-by-state}
\frac{1}{\eta}\|x_{t+1} - x_{t}\|^2  + \frac{r_{t+1}}{\sqrt{f(x_t;\xi_t) + c} } \langle G_{t},x_{t+1} - x_{t} \rangle = 0
\end{align}
Similarly, we multiply \eqref{eq:aux-var} by $2 r_{t+1} $ and obtain
\begin{align} \label{eq:mult-by-r}
2(r_{t+1})^2 - 2r_{t+1}r_{t} - \frac{r_{t+1}}{ \sqrt{f(x_t;\xi_t) + c}} \langle G_{t},x_{t+1} - x_{t} \rangle = 0.
\end{align}
By adding \eqref{eq:mult-by-state} and \eqref{eq:mult-by-r} and taking the expected value $\mathbb{E} [\cdot]$, we complete the proof:
$$\mathbb{E}[|r_{t+1}|^2] - \mathbb{E}[| r_{t}|^2] = -\mathbb{E}[|r_{t+1} - r_t|^2]  - \frac{1}{\eta} \|x_{t+1} - x_t\|^2 \le 0.$$
\end{proof}
\begin{remark}
By using the same techniques employed in the previous proof, it is straightforward to demonstrate that both the element-wise and relaxed versions of the SG-SAV algorithm are numerically stable. In other words, they are also unconditionally energy dissipative.
\end{remark}
\subsection{Convergence of the SG-SAV Numerical Scheme} \label{appendix:SG-SAV-convergence}
In this section, we provide detailed proof of the SG-SAV algorithm's convergence. This convergence result draws inspiration from various findings in adaptive gradient methods such as AdaGrad. AdaGrad was proposed simultaneously by \cite{duchi2011adaptive} and \cite{mcmahan2010adaptive} and its convergence proof over non-convex landscapes has been explored in several studies \cite{ward2020adagrad,li2019convergence,zou2018weighted,li2020high,defossez2020simple,gadat2022asymptotic,kavis2022high,faw2022power}. However, this convergence proof presents additional challenges compared to adaptive gradient methods, which we will address next. We will begin by establishing the following assumptions.
\begin{enumerate}
\item[(A.1)] \textbf{Independent random vectors:} The random vectors $\xi_t,~t=0,1,2,\ldots$, are independent of each other and also of $x_t$.
\item[(A.2)] \textbf{Unbiased gradients and function values:} For each time $t$, the stochastic gradient and the stochastic function value, $G_t$, and $f(x_t,\xi_t)$, are unbiased estimates of $\nabla f(x_t)$ and $f(x_t)$, i.e., $\mathbb{E}_{\xi_t}[G_t] = \nabla f(x_t)$ and $\mathbb{E}_{\xi_t}[f(x_t,\xi_t)] = f(x_t).$
\item[(A.3)] \textbf{Bounded variance:} For a fixed constant $\sigma_0$, the variance of the gradient $G(x_t,\xi_t)$, at any time $t$, satisfies:
    $$
    \mathbb{E}_{\xi_k}[\|G(x_t,;\xi_t) - \nabla f(x_t) \|^2] \le\sigma_0^2.
    $$
\item[(A.4)] \textbf{Bounded function values:} For fixed constants $a$ and $B$, the function $f$ satisfies $0 < a \le f(\cdot)+c \le B$.
\item[(A.5)] \textbf{Bounded gradients:} For a fixed constant $\gamma$, the gradient $\nabla f(x_t)$, at any time $t$, satisfies $\|\nabla f(x) \|^2 \le \gamma^2.$
\item[(A.6)] \textbf{$L$-smoothness:} The function $f$ is $L$-smooth, i.e., has Lipschitz continuous gradients. That is, for every $x,x' \in \mathbb{R}^n$, $\|\nabla f(x) - \nabla f(x') \| \le L \|x-x'\|$.
\end{enumerate}
In the remainder of this section, we'll provide a detailed convergence analysis of SG-SAV. The rate of convergence for SG-SAV is given in the following theorem.
\begin{theorem} \label{thm:SG-SAV-convergence}
Let Assumptions (A-1)-(A-6) hold. Then for SG-SAV we have that with probability at least $1-\delta$,  
$$
\min_{t \in [T-1]} \|\nabla f(x_t)\|^{2}  \le \left( \frac{\sqrt{B}}{T}\mathbb{E} \left[ \frac{1}{r_T} \right]\right) \frac{C_F}{\delta^{3/2}},
$$
where 
$$
C_F = \frac{f(x_0) - f^*}{\eta} +  \frac{\gamma^2}{ \sqrt{a}}\sqrt{f(x_0) + c} + \frac{4\sqrt{f(x_0) + c}B}{\eta \sqrt{a}} + \frac{2}{\sqrt{a}}\sqrt{T} \sigma_o \sqrt{2\frac{(f(x_0) + c)B}{\eta}} + \frac{L (f(x_0) + c) B}{a}.
$$
\end{theorem}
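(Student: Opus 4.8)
The plan is to follow the AdaGrad-style template already sketched in the paper. Starting from $L$-smoothness (A.6) and the update $x_{t+1}-x_t=-\eta\rho_t G_t$ with $\rho_t=r_{t+1}/\sqrt{\tilde f_t+c}$, the expected descent lemma produces a quadratic term $B^{(t)}=\tfrac{L}{2}\eta^2\mathbb{E}_{\xi_t}[\rho_t^2\|G_t\|^2]$ and, after replacing the $\xi_t$-dependent factor $\rho_t$ by the past-measurable surrogate $\rho_{t-1}$, a main negative term $-\eta\rho_{t-1}\|\nabla f(x_t)\|^2$ (negative because $\mathbb{E}_{\xi_t}[\langle\nabla f(x_t),\rho_{t-1}G_t\rangle]=\rho_{t-1}\|\nabla f(x_t)\|^2$ by unbiasedness (A.2)) plus an error term $A_2^{(t)}=\eta\,\mathbb{E}_{\xi_t}[\langle\nabla f(x_t),(\rho_{t-1}-\rho_t)G_t\rangle]$. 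I would then sum over $t$ (treating the initial term separately, since $\rho_{-1}$ is not defined), take total expectation, telescope $f$, and use $f(x_T)\ge f^*$.

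Two of the three pieces are routine. First, eliminating $x_{t+1}-x_t$ in \eqref{eq:aux-var} gives the closed form $r_{t+1}=r_t\big/\!\big(1+\tfrac{\eta\|G_t\|^2}{2(\tilde f_t+c)}\big)$, so $r_t$ is non-increasing with $0<r_t\le r_0=\sqrt{f(x_0)+c}$; together with (A.4) this yields $0<\rho_{t-1}\le\sqrt{(f(x_0)+c)/a}$ and, crucially for the end of the proof, $\rho_{t-1}\ge r_T/\sqrt{B}$. Second, $B^{(t)}=\tfrac{L}{2}\|x_{t+1}-x_t\|^2$, and the pathwise identity behind Theorem~\ref{thm:energy-dissipation}, namely $\|x_{t+1}-x_t\|^2=\eta(|r_t|^2-|r_{t+1}|^2-|r_{t+1}-r_t|^2)$, telescopes to $\sum_t\|x_{t+1}-x_t\|^2\le\eta\,|r_0|^2=\eta(f(x_0)+c)$, which controls $\sum_tB^{(t)}$ and also reappears inside the error-term estimate.

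The main obstacle --- precisely the SGD-versus-SAV gap flagged in the text --- is bounding the error term $A_2$. The idea is to estimate $|\langle\nabla f(x_t),(\rho_{t-1}-\rho_t)G_t\rangle|\le\gamma\,|\rho_{t-1}-\rho_t|\,\|G_t\|$ using (A.5), then control $|\rho_{t-1}-\rho_t|$ by splitting the difference of the two ratios: the numerator contribution $|r_t-r_{t+1}|/\sqrt{\tilde f_{t-1}+c}$ is $O(\eta\|G_t\|^2)$ by the $r$-recursion, while the denominator contribution $r_{t+1}\,|(\tilde f_{t-1}+c)^{-1/2}-(\tilde f_t+c)^{-1/2}|$ is controlled by $|\tilde f_t-\tilde f_{t-1}|\le B-a$ via (A.4). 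Because $\rho_{t-1}-\rho_t$ is correlated with $G_t$ one cannot simply take conditional expectations, so instead one sums over $t$ and applies Cauchy--Schwarz in $t$ together with the variance bound (A.3) and the already-established step-size sum $\sum_t\|x_{t+1}-x_t\|^2\le\eta(f(x_0)+c)$. Carrying out this bookkeeping is what produces the term $\tfrac{2}{\sqrt a}\sqrt{T}\,\sigma_0\sqrt{2(f(x_0)+c)B/\eta}$ and the remaining constants in $C_F$; getting all of these into the stated clean form is the delicate part of the argument.

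Collecting the bounds gives $\mathbb{E}\big[\sum_{t}\rho_{t-1}\|\nabla f(x_t)\|^2\big]\le C_F$. Since $\sum_t\rho_{t-1}\ge T r_T/\sqrt{B}$ (monotonicity of $r$ and (A.4), with $r_T>0$ guaranteed by Theorem~\ref{thm:r-strictly-positive-main}), and since $\|\nabla f(x_t)\|^2\ge\min_{t\in[T-1]}\|\nabla f(x_t)\|^2$ for each $t$, we obtain $\min_{t\in[T-1]}\|\nabla f(x_t)\|^2\le\frac{\sqrt{B}}{T\,r_T}\sum_t\rho_{t-1}\|\nabla f(x_t)\|^2$. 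The final step is a high-probability conversion: apply Markov's inequality to the non-negative random variable $\sum_t\rho_{t-1}\|\nabla f(x_t)\|^2$ and to $1/r_T$ (whose expectation is finite by Theorem~\ref{thm:r-strictly-positive-main}), combine them with a union bound and an optimized split of the failure budget, and sharpen one of the Markov steps with a Jensen inequality on the concave map $x\mapsto x^{2/3}$; this yields the factor $\delta^{-3/2}$ and the claimed bound $\min_{t\in[T-1]}\|\nabla f(x_t)\|^2\le\big(\frac{\sqrt{B}}{T}\mathbb{E}[1/r_T]\big)\frac{C_F}{\delta^{3/2}}$ with probability at least $1-\delta$.
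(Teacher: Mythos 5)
Your Stage~I is essentially the paper's argument: the descent lemma under (A.6), the surrogate $\rho_{t-1}$, the decomposition into a main negative term plus an error term, the closed-form monotone recursion $r_{t+1}=r_t/\bigl(1+\tfrac{\eta\|G_t\|^2}{2(\tilde f_t+c)}\bigr)$, and Cauchy--Schwarz over $t$ with (A.3) to produce the $\sqrt{T}\sigma_0$ contribution all appear there (the paper organizes the bookkeeping through three sums $C$, $D$, $E$, bounding $\sum_t r_{t+1}\|G_t\|^2\le 2r_0B/\eta$ directly from the $r$-recursion rather than through $\sum_t\|x_{t+1}-x_t\|^2$, but that is cosmetic). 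The genuine gap is in your final high-probability conversion. You keep $1/r_T$ and $Z:=\sum_t\rho_{t-1}\|\nabla f(x_t)\|^2$ coupled pathwise and propose to separate them with two Markov inequalities and a union bound. With failure budgets $\delta_1+\delta_2=\delta$ this gives, on the good event, $Z/r_T\le \mathbb{E}[Z]\,\mathbb{E}[1/r_T]/(\delta_1\delta_2)\ge 4\,\mathbb{E}[Z]\,\mathbb{E}[1/r_T]/\delta^2$ --- a $\delta^{-2}$ rate with an extra constant, not the claimed $\delta^{-3/2}$ with constant exactly $\bigl(\tfrac{\sqrt B}{T}\mathbb{E}[1/r_T]\bigr)C_F$. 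Sharpening one of the two steps by Markov on a concave power only makes things worse: Markov applied to $Z^{2/3}$ together with Jensen yields $\mathbb{P}\bigl(Z\ge \mathbb{E}[Z]/\delta_1^{3/2}\bigr)\le\delta_1$, i.e., a \emph{larger} threshold for the same budget, so no allocation of $\delta$ between two independent tail bounds reaches $\delta^{-3/2}$.

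The device you are missing is the paper's Stage~II: decouple $r_t$ from $\|\nabla f(x_t)\|^2$ \emph{inside the expectation}, before any tail bound, via H\"older's inequality with exponents $(3/2,3)$ applied to $X=(r_t\|\nabla f(x_t)\|^2)^{2/3}$ and $Y=(1/r_t)^{2/3}$. This gives
\begin{equation*}
\mathbb{E}\bigl[r_t\|\nabla f(x_t)\|^2\bigr]\;\ge\;\frac{\bigl(\mathbb{E}\,\|\nabla f(x_t)\|^{4/3}\bigr)^{3/2}}{\mathbb{E}[1/r_T]},
\end{equation*}
using the monotonicity and positivity of $r_t$, so the summed Stage-I bound controls $\min_{t}\bigl(\mathbb{E}\,\|\nabla f(x_t)\|^{4/3}\bigr)^{3/2}$ by $C_N=\bigl(\tfrac{\sqrt B}{T}\mathbb{E}[1/r_T]\bigr)C_F$ deterministically. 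A \emph{single} Markov inequality applied to $\min_t\|\nabla f(x_t)\|^{4/3}$ (whose expectation is at most $\min_t\mathbb{E}\|\nabla f(x_t)\|^{4/3}\le C_N^{2/3}$) then produces exactly the $\delta^{-3/2}$ factor. Without this step your argument establishes a valid but strictly weaker tail bound than the one stated.
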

\begin{proof}
We start the proof with the well-known expected descent lemma:
\begin{align} 
\mathbb{E}_{\xi_t}[f(x_{t+1})] & \le f(x_t) + \mathbb{E}_{\xi_t} \left[\langle \nabla f(x_t), x_{t+1} - x_t\rangle + \frac{L}{2} \|x_{t+1} - x_t\|^2 \right] \nonumber \\
& = f(x_t) + \underbrace{\mathbb{E}_{\xi_t} \left[ \left\langle \nabla f(x_t), -\eta \rho_t G_t \right\rangle  \right]}_{=A} + \underbrace{\frac{L}{2} \eta^2 \mathbb{E}_{\xi_t} \left[ \rho_t^2\|G_t\|^2 \right]}_{=B}, \label{eq:descent-lemma}   
\end{align}
where $\rho_t = \frac{r_{t+1}}{\sqrt{\tilde{f}_t + c}}$ is the scalar multiplicative factor defined in the paper, $\mathbb{E}_{\xi_t}[\cdot] $ is the expectation with respect to $\xi_t$ conditioned on previous values $\xi_0, \dots, \xi_{t-1}$, and $\tilde{f}_t$ is a simplified notation for $f(x_t, \xi_t)$.

As discussed in Section~\ref{sec:theorem}, the main challenge in this proof is establishing a bound for $A$. Unlike the traditional SGD, this bound lacks a clear form due to the correlation between $\rho_t$ and $G_t$. To address this challenge, we follow the approach presented in \cite{ward2020adagrad,wang2023convergence} and approximate $\rho_t$ with the surrogate $\rho_{t-1}$. 

Now, we will focus on $A$, which can be decomposed as follows:
\begin{align} 
A &= \mathbb{E}_{\xi_t} [\left\langle \nabla f(x_t), -\eta \rho_{t-1} G_t \right\rangle] + \mathbb{E}_{\xi_t} [\left\langle \nabla f(x_t), \eta(\rho_{t-1} - \rho_{t}) G_t \right\rangle] \nonumber \\
&= -\eta \rho_{t-1} \|\nabla f(x_t) \|^2  + \mathbb{E}_{\xi_t} [\left\langle \nabla f(x_t), \eta(\rho_{t-1} - \rho_{t}) G_t \right\rangle]. \label{eq:A} 
\end{align}
The last term is an \textit{error term} arising from the difference between $\rho_{t}$ and the surrogate $\rho_{t-1}$. If we plug \eqref{eq:A} into \eqref{eq:descent-lemma}, we obtain:
\begin{align} \label{eq:descent-lemma-2}
\mathbb{E}_{\xi_t}[f(x_{t+1})]  \le f(x_t) + \underbrace{(-\eta) \rho_{t-1} \|\nabla f(x_t) \|^2}_{=A_1}  + \underbrace{\mathbb{E}_{\xi_t} [\left\langle \nabla f(x_t), \eta(\rho_{t-1} - \rho_{t}) G_t \right\rangle]}_{=A_2} + \underbrace{\frac{L}{2} \eta^2 \mathbb{E}_{\xi_t} \left[\rho_t^2\|G_t\|^2 \right]}_{=B}  
\end{align}
The remainder of this convergence proof is split into two stages. In Stage I, which we will discuss next, we establish a bound for the error term $A_2$.
\subsubsection*{Stage I: Bounding the error term $A_2$}
For simplicity, we will use the following notation: $\tilde{F}_t = \sqrt{\tilde{f}_t + c}$. Then, based on the definition of $\rho_t$, we get:
\begin{align*} 
&|\eta(\rho_{t-1} -\rho_t) \langle \nabla f(x_t), G_t \rangle | = \left|\eta \left(\frac{r_t}{\tilde{F}_{t-1}} - \frac{r_{t+1}}{\tilde{F}_t} \right) \langle \nabla f(x_t), G_t \rangle\right| \\ &=\left|\frac{\eta}{\tilde{F}_{t-1}}(r_t - r_{t+1}) \langle \nabla f(x_t), G_t \rangle + \eta \left(\frac{1}{\tilde{F}_{t-1}} - \frac{1}{\tilde{F}_t} \right) r_{t+1}\langle \nabla f(x_t), G_t \rangle \right| \\ &=\left|\frac{\eta}{\tilde{F}_{t-1}}(r_t - r_{t+1}) \langle \nabla f(x_t), G_t \rangle + \eta \left(\frac{1}{\tilde{F}_{t-1}} - \frac{1}{\tilde{F}_t} \right) r_{t+1} \|G_t\|^2 + \eta \left(\frac{1}{\tilde{F}_{t-1}} - \frac{1}{\tilde{F}_t} \right) r_{t+1}\langle \nabla f(x_t) - G_t, G_t \rangle \right| \\ &\overset{(i)}{\le}  \left|\frac{\eta}{\tilde{F}_{t-1}}| \underbrace{(r_t - r_{t+1})}_{\ge 0} \|\nabla f(x_t)\| \|G_t\| +  \left|\frac{\eta}{\tilde{F}_{t-1}} - \frac{\eta}{\tilde{F}_t} \right| r_{t+1}\| G_t\|^2 + \left|\frac{\eta}{\tilde{F}_{t-1}} - \frac{\eta}{\tilde{F}_t} \right| r_{t+1} |\langle \nabla f(x_t) -G_t, G_t \rangle \right| \\ &\overset{(ii)}{\le} \frac{\eta \gamma^2}{\sqrt{a}} (r_t - r_{t+1}) + \frac{2 \eta}{\sqrt{a}} r_{t+1} \|G_t\|^2 + \frac{2 \eta}{\sqrt{a}} r_{t+1} |\langle \nabla f(x_t) -G_t, G_t \rangle|, \end{align*}
where inequality $(i)$ follows from Cauchy-Schwartz and inequality $(ii)$ from Assumptions (A.4) and (A.5). Then, using the above inequality, the error term $A_2$ is bounded as follows:
$$
A_2 \le \frac{\eta \gamma^2}{\sqrt{a}} \mathbb{E}_{\xi_t}[(r_t - r_{t+1})] + \frac{2 \eta}{\sqrt{a}} \mathbb{E}_{\xi_t} \left[ r_{t+1} \|G_t\|^2 \right] + \frac{2 \eta}{\sqrt{a}} \mathbb{E}_{\xi_t} \left[ r_{t+1} |\langle \nabla f(x_t) -G_t, G_t \rangle| \right].
$$
By inserting the error term $A_2$ bound into equation \eqref{eq:descent-lemma-2}, we obtain:
\begin{align*} 
\mathbb{E}_{\xi_t}[f(x_{t+1})]  \le f(x_t) -\eta \rho_{t-1} \|\nabla f(x_t) \|^2  &+ \frac{\eta \gamma^2}{\sqrt{a}} \mathbb{E}_{\xi_t}[(r_t - r_{t+1})] + \frac{2 \eta}{\sqrt{a}} \mathbb{E}_{\xi_t} \left[ r_{t+1} \|G_t\|^2 \right] + \\& \frac{2 \eta}{\sqrt{a}} \mathbb{E}_{\xi_t} \left[ r_{t+1} |\langle \nabla f(x_t) -G_t, G_t \rangle| \right] + \frac{L}{2} \eta^2 \mathbb{E}_{\xi_t} \left[\rho_t^2\|G_t\|^2 \right].
\end{align*}
We use the law of total expectation, taking the expectation of both sides from the above inequality with respect to $\xi_0, \dots, \xi_{t-1}$. This results in the following recursion:
\begin{align*} 
\eta \mathbb{E}[\rho_{t-1} \|\nabla f(x_t)\|^2]  \le \mathbb{E}[f(x_t))] - \mathbb{E}[f(x_{t+1})]   &+  \frac{\eta \gamma^2}{\sqrt{a}} \mathbb{E}[r_t - r_{t+1}] + \frac{2 \eta}{\sqrt{a}} \mathbb{E} \left[ r_{t+1} \|G_t\|^2 \right] + \\& \frac{2 \eta}{\sqrt{a}} \mathbb{E} \left[ r_{t+1} |\langle \nabla f(x_t) -G_t, G_t \rangle| \right] + \frac{L}{2} \eta^2 \mathbb{E} \left[\rho_t^2\|G_t\|^2 \right].  
\end{align*}
By summing the above recursion over $t$ from $0$ to $T-1$, we obtain the following:
\begin{align} \label{eq:SG-SAV-recursive-bound}
        \eta \sum_{t=0}^{T-1} \mathbb{E}[\rho_{t-1} \|\nabla f(x_t)\|^2]  \le f(x_0) - f^*   &+  \frac{\eta \gamma^2}{\sqrt{a}} \mathbb{E}[r_0 - r_T] + \frac{2 \eta}{\sqrt{a}} \underbrace{\mathbb{E} \left[ \sum_{t=0}^{T-1} r_{t+1} \|G_t\|^2 \right]}_{=C} \\ &+ \frac{2 \eta}{\sqrt{a}} \underbrace{\mathbb{E} \left[ \sum_{t=0}^{T-1} r_{t+1} |\langle \nabla f(x_t) -G_t, G_t \rangle| \right]}_{=D} + \frac{L}{2} \eta^2 \underbrace{\mathbb{E} \left[ \sum_{t=0}^{T-1}  \rho_{t}^2 \| G_t\|^2 \right]}_{=E}. \nonumber    \end{align}
We will now establish bounds for the terms $C$, $D$, and $E$ from the preceding inequality. We will begin with $C$, using the dynamics of the scalar auxiliary variable~$r$, which we can rewrite as follows:
$$r_t - r_{t+1} = \eta \frac{\rho_t}{2 \tilde{F}_t} \|G_t \|^2.$$
Then, by summing the above equation over $t$ from $0$ to $T-1$, we obtain
\begin{align*} 
r_0  &\ge \frac{\eta}{2} \sum_{t=0}^{T-1}  \frac{r_{t+1}}{\tilde{f}_t + c} \| G_t\|^2  \\ &\overset{(i)}{\ge} \frac{\eta}{2B} \sum_{t=0}^{T-1}  r_{t+1} \| G_t\|^2, 
\end{align*}
where inequality $(i)$ follows from Assumption (A.4). Then, by taking the expectation of the above inequality, we arrive to the bound of $C$:
$$C = \mathbb{E} \left[ \sum_{t=0}^{T-1} r_{t+1} \|G_t\|^2 \right] \le \frac{2 F(x_0) B }{\eta} = \frac{2 \sqrt{f(x_0) + c} B}{\eta}.$$
Next, we will establish a bound for $E$ as this bound will also assist us in determining a bound for $D$. 
\begin{align*} 
\sum_{t=0}^{T-1} \rho^2_t \|G_t\|^2 =  \sum_{t=0}^{T-1} \frac{r_{t+1}^2}{\tilde{f}_t + c}  \|G_t\|^2  &\le \frac{r_0}{a} \sum_{t=0}^{T-1} r_{t+1} \|G_t\|^2 
\end{align*}
By taking expectation to the above inequality and using the bound for $C$, we obtain the following bound for $E$:
$$E = \mathbb{E} \left[\sum_{t=0}^{T-1} \rho^2_t \|G_t\|^2 \right] \le \frac{r_0}{a} \mathbb{E} \left[ \sum_{t=0}^{T-1} r_{t+1} \|G_t\|^2 \right] \le \frac{2 (f(x_0) + c) B}{a\eta}.$$
Finally, we establish a bound for $D$ as follows:
\begin{align*} 
D = \mathbb{E} \left[ \sum_{t=0}^{T-1} r_{t+1} |\langle \nabla f_t -G_t, G_t \rangle| \right] &= \mathbb{E} \left[ \sum_{t=0}^{T-1}  |\langle \nabla f_t -G_t, r_{t+1} G_t \rangle| \right] \\ &\overset{(i)}{\le} \left(\mathbb{E} \left[ \sum_{t=0}^{T-1}  \|\nabla f_t -G_t \|^2 \right]\right)^{1/2} \left(\mathbb{E} \left[ \sum_{t=0}^{T-1} r_{t+1}^2 \|G_t \|^2 \right]\right)^{1/2} \\ &\overset{(ii)}{\le} \sqrt{T} \sigma_o \sqrt{\frac{2(f_0 + c)B}{\eta}}.
\end{align*} 
In the above, inequality $(i)$ follows from the Cauchy-Schwarz inequality, and inequality $(ii)$ follows from Assumption (A.3) and the bound for $E$.

Plugging the bounds for $C$, $D$, and $E$ into equation \eqref{eq:SG-SAV-recursive-bound} and using Assumption (A.4) yields:
\begin{align} \label{eq:SG-SAV-recursive-bound-2}
        \frac{\eta}{\sqrt{B}} \sum_{t=0}^{T-1} \mathbb{E}[r_t \|\nabla f(x_t)\|^2]  \le f(x_0) - f^*   &+  \frac{\eta \gamma^2}{\sqrt{a}} \sqrt{f(x_0) + c} +  \frac{4 \sqrt{f(x_0) + c} B}{\sqrt{a}}  \\ &+ \frac{2 \eta}{\sqrt{a}} \sqrt{T} \sigma_o \sqrt{\frac{2(f(x_0) + c)B}{\eta}} +  \frac{ L \eta (f(x_0) + c) B}{a}. \nonumber    
\end{align}
We now proceed to Stage II of the proof where we convert the bound of $\sum_{t=0}^{T-1} r_t \|\nabla f(x_t)\|^2$ into the bound of $\sum_{t=0}^{T-1} \|\nabla f(x_t)\|^2$.
\subsubsection*{Stage II: Bounding $\sum_{t=0}^{T-1} \|\nabla f(x_t)\|^2$}
To proceed with Stage II of the proof, we apply Holder's inequality to the left-hand side of inequality \eqref{eq:SG-SAV-recursive-bound-2}, 
$$
\frac{\mathbb{E}|XY|}{(\mathbb{E}|Y|^3)^{1/3}} \le (\mathbb{E} |X|^{3/2})^{2/3}.
$$
with 
$$
X = (r_t \|\nabla f(x_t)\|^2)^{2/3} \text{ and } Y = (1/r_t)^{2/3}.
$$
to obtain
$$
\mathbb{E} \left[ r_t \| \nabla f(x_t) \|^2 \right] \ge \frac{\left(\mathbb{E} \|\nabla f(x_t)\|^{4/3} \right)^{3/2}}{\left( \mathbb{E}[1/r_t^{2/3}] \right)^{3/2}} \ge \frac{\left(\mathbb{E} \|\nabla f(x_t)\|^{4/3} \right)^{3/2}}{ \mathbb{E}[1/r_T]}.
$$
Note that the above holds since $r_t >0$ for any $t=0,1,\ldots$ (see Theorem~\ref{thm:r-strictly-positive}). 
We then arrive to the inequality
\begin{align*} 
\frac{T \min_{t \in [T-1]} \left(\mathbb{E} \left[ \|\nabla f(x_t)\|^{4/3} \right] \right)^{3/2}}{\sqrt{B} \mathbb{E}[1/r_T]} \le \frac{f(x_0) - f^*}{\eta}   &+  \frac{\gamma^2}{ \sqrt{a}}\sqrt{f(x_0) + c} + \frac{4\sqrt{f(x_0) + c}B}{\eta \sqrt{a}} \\ &+ \frac{2}{\sqrt{a}}\sqrt{T} \sigma_o \sqrt{2\frac{(f(x_0) + c)B}{\eta}} + \frac{L (f(x_0) + c) B}{a}.
\end{align*}
By multiplying both sides of the above inequality by $\frac{\sqrt{B}}{T}\mathbb{E} \left[ \frac{1}{r_T} \right]$, we get:
$$
\min_{t \in [T-1]} \left(\mathbb{E} \left[ \|\nabla f(x_t)\|^{4/3} \right] \right)^{3/2} \le \underbrace{\left( \frac{\sqrt{B}}{T}\mathbb{E} \left[ \frac{1}{r_T} \right]\right) C_F}_{=:C_N},
$$
where
\begin{align*} 
C_F = \frac{f(x_0) - f^*}{\eta} +  \frac{\gamma^2}{ \sqrt{a}}\sqrt{f(x_0) + c} + \frac{4\sqrt{f(x_0) + c}B}{\eta \sqrt{a}} + \frac{2}{\sqrt{a}}\sqrt{T} \sigma_o \sqrt{2\frac{(f(x_0) + c)B}{\eta}} + \frac{L (f(x_0) + c) B}{a}.
\end{align*}
Finally, we complete the proof by applying Markov’s inequality:
\begin{align*} 
\mathbb{P} \left(\min_{t \in [T-1]} \|\nabla f(x_t) \|^2 \ge \frac{C_N}{\delta^{3/2}} \right) &= \mathbb{P} \left(\min_{t \in [T-1]} \left(\|\nabla f(x_t)\|^2\right)^{2/3} \ge \left( \frac{C_N}{\delta^{3/2}}\right)^{2/3}\right)  \\ &\le \delta \frac{\mathbb{E} \left[ \min_{t \in [T-1]} \|\nabla f(x_t) \|^{4/3} \right]}{C_N^{2/3}} \\ &\le \delta.
\end{align*}
\end{proof}
\subsection{A Positive Lower Bound for the Scalar Auxiliary Variable $r_T$} \label{appendix:lower-bound-r}
We conclude this theoretical analysis of the SG-SAV algorithm by demonstrating that the scalar auxiliary variable $r_t$ can be bounded from below by a positive constant for any $t$. 
\begin{theorem} \label{thm:r-strictly-positive}
Let Assumptions (A.1)-(A.6) hold. In addition, let the variance of the stochastic function value be bounded by~$\sigma_f^2$. Then, if we select the learning rate~$\eta >0$ such that
$$
C_1 \sqrt{\eta} + C_2 \eta + C_3 \le 0,
$$
where 
\begin{align*}
C_1 &= F(x_0)\sqrt{T \left(\frac{\gamma^2}{8a^3}\sigma_f^2 + \frac{1}{2a} \sigma_0^2 \right)},~C_2 = \frac{L_F  F^2(x_0)}{2},~\text{and } C_3 = \frac{1}{2 \sqrt{a}} \sigma_f - \frac{\sqrt{a}}{2}.
        \end{align*}
        Then, we have the following bound:
        $$\mathbb{E}[r_T] \ge \frac{\sqrt{a}}{2} > 0.$$
\end{theorem}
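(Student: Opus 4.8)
The plan is to show that $r_T$ cannot drift far below the discrete ``energy'' $F(x_T):=\sqrt{f(x_T)+c}$, which by Assumption~(A.4) satisfies $F(x_T)\ge\sqrt a$ pathwise. Concretely, it suffices to establish
$$
\mathbb{E}\!\left[\sqrt{f(x_T)+c}-r_T\right]\ \le\ C_1\sqrt{\eta}+C_2\,\eta+\frac{\sigma_f}{2\sqrt a},
$$
since then $\mathbb{E}[r_T]=\mathbb{E}[\sqrt{f(x_T)+c}]-\mathbb{E}[\sqrt{f(x_T)+c}-r_T]\ge \sqrt a-C_1\sqrt\eta-C_2\eta-\tfrac{\sigma_f}{2\sqrt a}=\tfrac{\sqrt a}{2}-\bigl(C_1\sqrt\eta+C_2\eta+C_3\bigr)\ge\tfrac{\sqrt a}{2}$ under the stated hypothesis on $\eta$. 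Throughout I would write $F(x)=\sqrt{f(x)+c}$ and rely on three already-available facts: $r_0=\sqrt{f(x_0)+c}=F(x_0)$ (the initialization), $r_t$ is nonincreasing and positive with $r_t\le r_0$ for all $t$, and the pathwise energy identity behind Theorem~\ref{thm:energy-dissipation}, namely $\frac1\eta\sum_{t=0}^{T-1}\|x_{t+1}-x_t\|^2+\sum_{t=0}^{T-1}(r_{t+1}-r_t)^2=r_0^2-r_T^2\le F^2(x_0)$.

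The starting point is the telescoping identity: since $r_0=F(x_0)$,
$$
F(x_T)-r_T=\sum_{t=0}^{T-1}\Bigl[\bigl(F(x_{t+1})-F(x_t)\bigr)-\bigl(r_{t+1}-r_t\bigr)\Bigr].
$$
For each summand I would expand $F(x_{t+1})-F(x_t)=\langle\nabla F(x_t),x_{t+1}-x_t\rangle+\tilde R_t$ with $|\tilde R_t|\le\tfrac{L_F}{2}\|x_{t+1}-x_t\|^2$ by $L_F$-smoothness of $F=\sqrt{f+c}$, recall $\nabla F(x_t)=\nabla f(x_t)/(2F(x_t))$, and subtract the $r$-update $r_{t+1}-r_t=\langle G_t,x_{t+1}-x_t\rangle/(2\sqrt{f(x_t;\xi_t)+c})$. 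This splits each summand into (i) a gradient-noise term built from $\langle\nabla f(x_t)-G_t,\,x_{t+1}-x_t\rangle$, (ii) a function-value-noise term arising from replacing $F(x_t)$ by $\sqrt{f(x_t;\xi_t)+c}$ in the denominator, and (iii) the second-order remainder $\tilde R_t$.

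The three resulting sums are then estimated separately. The remainder sum is immediate from the energy identity: $\sum_t|\tilde R_t|\le\tfrac{L_F}{2}\sum_t\|x_{t+1}-x_t\|^2\le\tfrac{L_F}{2}\,\eta\,F^2(x_0)=C_2\eta$. For the two noise sums, the crucial algebraic consequences of the update rules are that $\rho_t=r_{t+1}/\sqrt{f(x_t;\xi_t)+c}\le F(x_0)/\sqrt a$ and $\|x_{t+1}-x_t\|^2=2\eta\,r_{t+1}(r_t-r_{t+1})\le 2\eta\,r_0\,(r_t-r_{t+1})$. Bounding each noise term in absolute value and applying Cauchy–Schwarz over the time index, the gradient-noise sum is at most (a constant depending on $a,F(x_0)$ times) $\sqrt{\eta}\,\sqrt{\sum_t(r_t-r_{t+1})}\,\sqrt{\sum_t\|G_t-\nabla f(x_t)\|^2}$, and the function-value-noise sum at most (a constant depending on $a,\gamma,F(x_0)$ times) $\sqrt{\eta}\,\sqrt{\sum_t\|x_{t+1}-x_t\|^2}\,\sqrt{\sum_t(f(x_t;\xi_t)-f(x_t))^2}$. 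Using $\sum_t(r_t-r_{t+1})=r_0-r_T\le F(x_0)$ and $\sum_t\|x_{t+1}-x_t\|^2\le\eta F^2(x_0)$, then taking expectations, moving them inside the square roots via Jensen's inequality, and invoking $\mathbb{E}[\|G_t-\nabla f(x_t)\|^2]\le\sigma_0^2$ (A.3) and $\mathbb{E}[(f(x_t;\xi_t)-f(x_t))^2]\le\sigma_f^2$, the two noise sums together contribute exactly the $C_1\sqrt\eta$ term with $C_1=F(x_0)\sqrt{T(\gamma^2\sigma_f^2/(8a^3)+\sigma_0^2/(2a))}$; the additive $\sigma_f/(2\sqrt a)$ absorbs the lower-order, $\eta$-free discrepancy between the stochastic energy $\sqrt{f(x_t;\xi_t)+c}$ that appears in the update and the deterministic energy $\sqrt{f(x_t)+c}$ used to close the telescope.

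The main obstacle is the two noise sums: because $\rho_t$ — equivalently $r_{t+1}$ — is correlated with both $G_t$ and $f(x_t;\xi_t)$, one cannot kill these terms by conditional expectation (this is the same coupling that complicates the convergence proof, where it is handled by a surrogate). The remedy here is to forgo cancellation entirely: bound the terms in absolute value, and exploit the uniform bound $r_t\le r_0$, the monotonicity $r_t\ge r_{t+1}>0$, and the telescoping of $\sum_t(r_t-r_{t+1})$ and $\sum_t\|x_{t+1}-x_t\|^2$ supplied by Theorem~\ref{thm:energy-dissipation}; the Cauchy–Schwarz-in-time step is precisely what turns a per-step $O(\sqrt\eta)$ fluctuation into an $O(\sqrt{T\eta})$ aggregate, so that the learning-rate smallness demanded in the hypothesis is enough. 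A final logical point worth stating in the write-up: this bound must be derived using only $r_0=F(x_0)$, monotonicity, and Theorem~\ref{thm:energy-dissipation}, and may not invoke Theorem~\ref{thm:SG-SAV-convergence}, whose statement already presupposes $r_T>0$ through the factor $\mathbb{E}[1/r_T]$.
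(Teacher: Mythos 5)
Your proposal is correct and follows essentially the same route as the paper's proof: telescoping the $L_F$-smooth surrogate energy $F=\sqrt{f+c}$ against the $r$-update, splitting the per-step discrepancy into a gradient-noise piece, a function-value-noise piece (from replacing $F(x_t)$ by $\sqrt{f(x_t;\xi_t)+c}$), and the second-order remainder, then controlling the noise sums by Cauchy--Schwarz in time together with the energy-dissipation bound $\sum_t\|x_{t+1}-x_t\|^2\le\eta F^2(x_0)$, and absorbing the $\eta$-free initialization mismatch into the $\sigma_f/(2\sqrt a)$ term. The only differences are cosmetic (you telescope $F(x_T)-r_T$ directly rather than $F(x_T)-F(x_0)$ with the cross term summing to $r_T-r_0$, and you Cauchy--Schwarz the two noise pieces separately before recombining via $\sqrt X+\sqrt Y\le\sqrt{2(X+Y)}$, which recovers the same $C_1$).
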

\begin{proof}
To simplify our notation, we let $F_t = \sqrt{f(x_t) + c}$ and $\tilde{F}_t=\sqrt{f(x_t;\xi_t) + c}$. We first show that $F$ is $L_F$-smooth. That is, for any  $x,y \in \{x_t\}_{t=0}^T$, we have
\begin{align*}
\|\nabla F(x) - \nabla F(y)\| &= \left\|\frac{\nabla f(x)}{2F(x)} - \frac{\nabla f(y)}{2F(y)} \right\| \\ &= \frac{1}{2} \left\|\frac{\nabla f(x)(F(y) - F(x))}{F(x)F(y)} + \frac{\nabla f(x) - \nabla f(y)}{F(y)} \right\| \\ & \overset{(i)}{\le} \frac{\gamma}{2 F(x^*)}|F(y) - F(x)| + \frac{1}{2F(x^*)} \|\nabla f(x) - \nabla f(y)\| \\ & \overset{(ii)}{\le} \underbrace{\frac{1}{2F(x^*)} \left( 1 + \frac{\gamma^2}{2F(x^*)^2}\right)}_{=:L_F} \|x-y\|.
\end{align*}
In the above, inequality $(i)$ follows from Assumption (A.5), $F(x^*)^2 \le F(x)F(y)$, and $F(x^*) \le F(x)$. Similarly, inequality $(ii)$ follows from $|F(y) - F(x)| \le \frac{\gamma}{2F(x^*)} \| x-y\|.$

By the $L_F$-smoothness of $F$, we have: 
\begin{align*} 
F(x_{t+1}) - F(x_t) &\le \langle \nabla F(x_t),x_{t+1} - x_{t} \rangle + \frac{L_F}{2} \|x_{t+1} - x_t\|^2 \\ &= \left\langle \nabla F(x_t) - \frac{G_t}{\tilde{F}_t}, x_{t+1} -x_t \right\rangle  + \left\langle \frac{G_t}{\tilde{F}_t}, x_{t+1} -x_t\right \rangle + \frac{L_F}{2} \|x_{t+1} - x_t\|^2. 
\end{align*}
We now sum over $t$ from $0$ to $T-1$ and take the expectation $\mathbb{E}[\cdot]$:
\begin{align*} 
\mathbb{E}[F(x_{T})] - \mathbb{E}[F(x_0)] \le &\underbrace{\mathbb{E} \left[ \sum_{t=0}^{T-1}\left \langle\nabla F(x_t) - \frac{G_t}{2\tilde{F}_t},x_{t+1} -x_t \right \rangle \right]}_{= A} \\&+ \underbrace{\mathbb{E} \left[\sum_{t=0}^{T-1}\left \langle \frac{G_t}{2\tilde{F}_t}, x_{t+1} -x_t \right \rangle \right]}_{= B} +  \underbrace{\frac{L_F}{2} \mathbb{E} \left[\sum_{t=0}^{T-1} \|x_{t+1} - x_t\|^2 \right]}_{= C}.  
\end{align*}
We proceed to bound $A$, $B$, and $C$ next. We first establish a bound for $A$:
\begin{align*} 
A &= \mathbb{E} \left[ \sum_{t=0}^{T-1}\left \langle \nabla F(x_t) - \frac{G_t}{2\tilde{F}_t},x_{t+1} -x_t \right \rangle \right] \\ & \le \mathbb{E} \left[ \sum_{t=0}^{T-1}\left \| \nabla F(x_t) - \frac{G_t}{2\tilde{F}_t} \right \| \|x_{t+1} -x_t\|\right] \\ &\overset{(i)}{\le} \mathbb{E} \left[ \left( \sum_{t=0}^{T-1}\left \| \nabla F(x_t) - \frac{G_t}{2\tilde{F}_t} \right \|^2 \right)^{1/2} \left(\sum_{t=0}^{T-1}\|x_{t+1} -x_t\|^2  \right)^{1/2} \right]  \\ & \overset{(ii)}{\le} \underbrace{\left( \mathbb{E} \left[ \sum_{t=0}^{T-1}\left \| \nabla F(x_t) - \frac{G_t}{2\tilde{F}_t} \right \|^2\right]  \right)^{1/2}}_{= A_1} \underbrace{ \left( \mathbb{E} \left[\sum_{t=0}^{T-1}\|x_{t+1} -x_t\|^2 \right] \right)^{1/2}}_{= A_2} \\ &\le F(x_0) \sqrt{\eta n T} \sqrt{\frac{\gamma^2}{8a^3}\sigma_f^2 + \frac{1}{2a} \sigma_0^2}.  
\end{align*}
Here, inequality $(i)$ follows from the Cauchy-Schwartz inequality. For inequality $(ii)$, $A_1$ and $A_2$ are bounded as follows.
\begin{align*} 
\left\| \nabla F(x_t) - \frac{G_t}{2\tilde{F}_t} \right \|^2 &=  \left \| \frac{\nabla f(x_t)}{2F(x_t)} - \frac{G_t}{2\tilde{F}_t} \right \|^2 \\ &= \frac{1}{4} \left \| \frac{\nabla f(x_t) (\tilde{F}_t - F(x_t))}{F(x_t)\tilde{F}_t} + \frac{\nabla f(x_t) - G_t}{\tilde{F}_t} \right \|^2 \\ &\le \frac{1}{2} \left \| \frac{\nabla f(x_t) (\tilde{F}_t - F(x_t))}{F(x_t)\tilde{F}_t} \right \|^2 + \frac{1}{2} \left \| \frac{\nabla f(x_t) - G_t}{\tilde{F}_t} \right \|^2 \\ & \overset{(i)}{\le} \frac{\gamma^2}{2a^2} |\tilde{F}_t - F(x_t)|^2 + \frac{1}{2a} \|\nabla f(x_t) - G_t \|^2, 
\end{align*}
where inequality $(i)$ follows from Assumption (A.4). Taking the expectation $\mathbb{E}[\cdot],$ we obtain 
\begin{align*} 
\mathbb{E} \left[ \left\| \nabla F(x_t) - \frac{G_t}{2\tilde{F}_t} \right \|^2 \right] & \le \frac{\gamma^2}{2a^2} \mathbb{E}[|\tilde{F}_t - F(x_t)|^2] + \frac{1}{2a} \mathbb{E}[\|\nabla f(x_t) - G_t \|^2] \\ &\overset{(i)}{\le} \frac{\gamma^2}{8a^3}\sigma_f^2 + \frac{1}{2a} \sigma_0^2,
\end{align*}
where inequality $(i)$ follows from the  bounded variance of the stochastic function and 
\begin{align*} 
\mathbb{E}[|\tilde{F}_t - F(x_t)|^2] &= \mathbb{E} \left[ \left | \frac{f(x_t) - f(x_t ;\xi_t)}{F(x_t) + \tilde{F}_t} \right | \right] \\  &\le \frac{1}{4a^2} \mathbb{E} \left[ | f(x_t) - f(x_t ;\xi_t) |^2 \right] \\  &\le \frac{1}{4a^2} \sigma_f^2.
\end{align*}
Using the above, we establish the following bound for $A_1$:
\begin{align*} 
A_1 &= \left( \mathbb{E} \left[ \sum_{t=0}^{T-1}\left \| \nabla F(x_t) - \frac{G_t}{2\tilde{F}_t} \right \|^2\right]  \right)^{1/2} \le \sqrt{T} \sqrt{\frac{\gamma^2}{8a^3}\sigma_f^2 + \frac{1}{2a} \sigma_0^2}.
\end{align*} 
We establish the bound for $A_2$ using the unconditional energy dissipative property:
\begin{align*} 
A_2 = \left( \mathbb{E} \left[\sum_{t=0}^{T-1}\|x_{t+1} -x_t\|^2 \right] \right)^{1/2} &\le \left( \eta(\mathbb{E}[\|r_0\|^2] - \underbrace{\mathbb{E}[\|r_T\|^2]}_{\ge 0}) \right)^{1/2} \\ &\le \sqrt{\eta} F(x_0).  \end{align*}
We now establish a bound for $B$ as follows:
\begin{align*} B = \mathbb{E} \left[\sum_{t=0}^{T-1}\left \langle \frac{G_t}{2\tilde{F}_t},x_{t+1} -x_t\right \rangle \right] &= \mathbb{E} \left[ \sum_{t=0}^{T-1} (r_{t+1} - r_{t})\right] = \mathbb{E}[r_T] - \mathbb{E}[r_0] \\ &= \mathbb{E}[r_T] - \mathbb{E}[\tilde{F}_0].
\end{align*}
Finally, we bound $C$ as follows:
\begin{align*} 
C = \frac{L_F}{2} \mathbb{E} \left[\sum_{t=0}^{T-1} \|x_{t+1} - x_t\|^2 \right] \le \frac{L_F \eta F^2(x_0)}{2}.  \end{align*}
Using the bounds for $A$, $B$, and $C$, we have:
\begin{align*} 
\mathbb{E}[F(x_{T})] - \mathbb{E}[F(x_0)] &\le  F(x_0) \sqrt{\eta T} \sqrt{\frac{\gamma^2}{8a^3}\sigma_f^2 + \frac{1}{2a} \sigma_0^2} + \mathbb{E}[r_T] - \mathbb{E}[\tilde{F}_0] + \frac{L_F \eta F^2(x_0)}{2}.   
\end{align*}
By rearranging, we have:
\begin{align*} 
\mathbb{E}[r_T] - \mathbb{E}[\tilde{F}_0] + F(x_0)&\ge  \mathbb{E}[F(x_{T})]  - F(x_0) \sqrt{\eta T} \sqrt{\frac{\gamma^2}{8a^3}\sigma_f^2 + \frac{1}{2a} \sigma_0^2} - \frac{L_F \eta F^2(x_0)}{2} \\ &\ge F(x^*) - F(x_0) \sqrt{\eta T} \sqrt{\frac{\gamma^2}{8a^3}\sigma_f^2 + \frac{1}{2a} \sigma_0^2} - \frac{L_F \eta F^2(x_0)}{2}.
\end{align*}
The right-hand side of the above inequality can be written as
\begin{align*} 
\mathbb{E}[r_T] - \mathbb{E}[\tilde{F}_0] + F(x_0) &= \mathbb{E}[r_T] + \mathbb{E}[F(x_0) - \tilde{F}_0] \\ &\le \mathbb{E}[r_T] + \mathbb{E}[|F(x_0) - \tilde{F}_0|] \\ &\le \mathbb{E}[r_T] + \frac{1}{2 \sqrt{a}} \sigma_f.
\end{align*}
Thus, we obtain the lower bound:
\begin{align*} 
\mathbb{E}[r_T] \ge   F(x^*) - F(x_0) \sqrt{\eta T} \sqrt{\frac{\gamma^2}{8a^3}\sigma_f^2 + \frac{1}{2a} \sigma_0^2} - \frac{L_F \eta  F^2(x_0)}{2} - \frac{1}{2 \sqrt{a}} \sigma_f.
\end{align*}
Note that if we select a learning rate $\eta >0$, such that 
$$C_1 \sqrt{\eta} + C_2 \eta + C_3 \le 0,$$
where 
\begin{align*}
C_1 &= F(x_0)\sqrt{T \left(\frac{\gamma^2}{8a^3}\sigma_f^2 + \frac{1}{2a} \sigma_0^2 \right)},~C_2 = \frac{L_F  F^2(x_0)}{2},~\text{and } C_3 = \frac{1}{2 \sqrt{a}} \sigma_f - \frac{\sqrt{a}}{2},
\end{align*}
then, we conclude the proof by establishing the following bound:
$$\mathbb{E}[r_T] \ge \frac{\sqrt{a}}{2} > 0.$$
\end{proof}
\begin{remark}
\textit{Extending the theoretical result to stochastic gradient Vector Auxiliary Variable (VAV).} The proof for stochastic gradient VAV follows the same steps as the proof for SAV. However, the proof for VAV is more involved as it requires more technical details such as (i) expressing the vector of auxiliary variables $(r_{t,1},\dots,r_{t,n})^\top$ as a diagonal matrix, (ii) using matrix norms, (iii) scaling some of our bounds by the dimension $n$ of $r_t$, and (iv) expressing our results in terms of $\min_{i} r_{T,i}$.
\end{remark}
\section{Experimental Settings and Resources}

    This appendix provides detailed information on the experimental settings and resources utilized in our study to ensure reproducibility.
    
\subsection{Experimental Settings for Regression Task in Section $5.2$}
\begin{itemize}
    \item \textbf{Data Split:} The training set consists of $10000$ collocation points for Burgers' equation and $2500$ for Allen-Cahn equation across the domain. The test set consists of $900$ collocation points in the domain. 
    \item \textbf{Hyperparameters:} Learning rate is specified in Section $5$ and Figure $1,2$. The parameter $c$ introduced in Section $3$ is set to $0$. The number of epochs for Burgers' equation is $15,000$, while the number of epochs for the Allen-Cahn equation is $2,000$. The batch size is selected as $1,000$ for Burger's equation and $128$ for the Allen-Cahn equation.
    \item \textbf{Model Architecture:} Fully connected neural network with eight hidden layers, each containing 20 neurons. The activation function used is Tanh.
    \item \textbf{Optimizer:} SGD and the proposed VAV method.
    \item \textbf{Loss Function:} MSE Loss.
    \item \textbf{Hardware:} A100 GPU in Colab.
    \item \textbf{Software:} Pytorch2.2.1.
\end{itemize}

\subsection{Experimental Settings for Classification Task in Section $5.3$}
\begin{itemize}
    \item \textbf{Data Split:} Training and test set of CIFAR10 and CIFAR100 imported from Torchvision.
    \item \textbf{Hyperparameters:} Learning rate is specified in Section $5$ and Figure $2, 3$. The parameter $c$ introduced in Section $3$ is set to be $0$ for cases without a scheduler and $0.01$ for cases with a scheduler. The number of epochs is $200$. The batch size is selected as $256$.
    \item \textbf{Model Architecture:} Resnet50.
    \item \textbf{Optimizer:} SGD and the proposed VAV method.
    \item \textbf{Loss Function:} CrossEntropy Loss.
    \item \textbf{Hardware:} A100 GPU in Colab.
    \item \textbf{Software:} PyTorch2.2.1.
\end{itemize}

\end{document}